\documentclass[lettersize,journal]{IEEEtran}
\usepackage{amsmath,amsfonts}
\hyphenation{op-tical net-works semi-conduc-tor IEEE-Xplore}
\def\BibTeX{{\rm B\kern-.05em{\sc i\kern-.025em b}\kern-.08em
		T\kern-.1667em\lower.7ex\hbox{E}\kern-.125emX}}
\usepackage{balance}

\usepackage[nocompress]{cite}
\usepackage{dsfont}
\usepackage{epsfig}
\usepackage{graphicx}
\usepackage{amsmath}
\usepackage{hyperref}
\usepackage{amssymb}
\usepackage{booktabs}
\usepackage{graphicx}
\usepackage{multirow}
\usepackage{amsthm}
\usepackage{soul}
\usepackage{pifont}
\usepackage{subfigure}
\usepackage{xcolor}
\usepackage[normalem]{ulem}
\usepackage{caption}
\captionsetup[table]{labelsep=period}
\usepackage{algorithm}
\usepackage{url}
\usepackage{graphicx}
\usepackage{tabularx}
\usepackage{tikz}
\usepackage{amssymb}
\usepackage{listings}
\usepackage{pythonhighlight}
\usetikzlibrary{calc, angles, quotes}
\usetikzlibrary{positioning, calc, matrix}
\newcommand{\methodname}{BootSC\space}
\newtheorem{theorem}{Theorem}
\newtheorem{lemma}{Lemma}

\begin{document}
	\title{Bootstrap Deep Spectral Clustering with\\Optimal Transport}
	\author{Wengang~Guo,~Wei~Ye,~Chunchun~Chen,~Xin~Sun,~\IEEEmembership{Member,~IEEE},\\Christian~B\"ohm,~Claudia~Plant,~and~Susanto~Rahardja,~\IEEEmembership{Fellow,~IEEE}
		\thanks{Wengang~Guo is with the College of Electronic and Information Engineering, Tongji University, Shanghai 201804, China (Email: guowg@tongji.edu.cn). }
			\thanks{Wei~Ye is with the College of Electronic and Information Engineering, Shanghai Institute of Intelligent Science and Technology, Tongji University, Shanghai 201804, China and Shanghai Innovation Institute, Shanghai 200231, China (Email: yew@tongji.edu.cn). (Corresponding author: Wei~Ye)}
			\thanks{Chunchun~Chen is with the Shanghai Research Institute for Intelligent Autonomous Systems, Tongji University, Shanghai 201210, China (Email: c2chen@tongji.edu.cn).}
			\thanks{Xin~Sun is with the Faculty of Data Science, City University of Macau, Taipa, Macau, China (Email: sunxin1984@ieee.org).}
			\thanks{Christian~B\"ohm and Claudia~Plant are with the Faculty of Computer Science, University of Vienna,  Vienna 1010, Austria (Email: \{christian.boehm,~claudia.plant\}@univie.ac.at).}
			\thanks{Susanto~Rahardja is with the Singapore Institute of Technology, Singapore 138683, Singapore (Email: susantorahardja@ieee.org).}}
		
	\markboth{Journal of \LaTeX\ Class Files,~Vol.~18, No.~9, September~2020}%
	{How to Use the IEEEtran \LaTeX \ Templates}

	\maketitle
	
	\begin{abstract}
		Spectral clustering is a leading clustering method.
		Two of its major shortcomings are the disjoint optimization process and the limited representation capacity. 
		To address these issues, we propose a deep spectral clustering model (named BootSC), which jointly learns all stages of spectral clustering---affinity matrix construction, spectral embedding, and $k$-means clustering---using a single network in an end-to-end manner. 
		BootSC leverages effective and efficient optimal-transport-derived supervision to bootstrap the affinity matrix and the cluster assignment matrix.
		Moreover, a semantically-consistent orthogonal re-parameterization technique is introduced to orthogonalize spectral embeddings, significantly enhancing the discrimination capability.
		Experimental results indicate that BootSC achieves state-of-the-art clustering performance. 
		For example, it accomplishes a notable 16\% NMI improvement over the runner-up method on the challenging ImageNet-Dogs dataset.
		Our code is available at \url{https://github.com/spdj2271/BootSC}.
	\end{abstract}

	\begin{IEEEkeywords}
		Clustering algorithms, Deep clustering, Spectral clustering, Pattern recognition, Unsupervised learning.
	\end{IEEEkeywords}
	
	\section{Introduction}
\IEEEPARstart{D}{eep} clustering models aim to detect underlying cluster structures within unlabelled data.
To train these models, creating effective and efficient supervision signals is necessary. 
Inadequate supervision could result in excessive computational costs \cite{cai2022efficient}, training instability \cite{zhan2020online}, and degenerate results \cite{caron2018deep}.

\begin{figure}[t]
	\newcommand{\AffinityGraphWithd}{0.304}
	\centering	
	\subfigure[Initial stage]{\includegraphics[height=\AffinityGraphWithd\columnwidth]{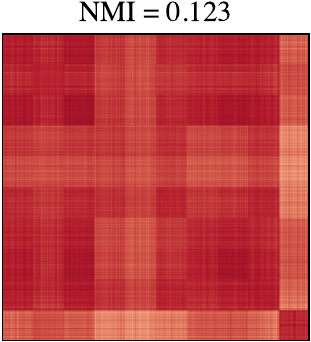}\label{img_initial}}
	\subfigure[Middle stage]{\includegraphics[height=\AffinityGraphWithd\columnwidth]{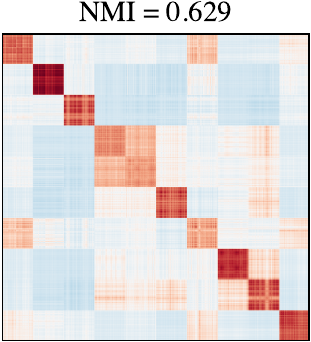}\label{img_middle}}
	\subfigure[Final stage~~~~~~~~~~~~~~~]{\includegraphics[height=\AffinityGraphWithd\columnwidth]{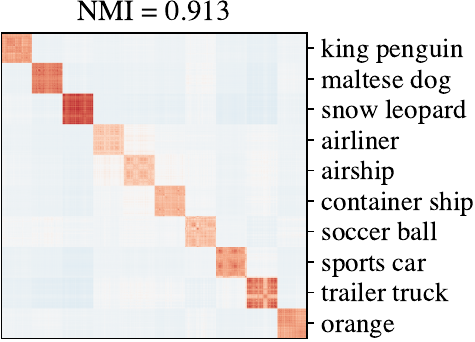}\label{img_final}}
	\caption{The affinity matrix learned by our BootSC at different training stages on the ImageNet-10 dataset \cite{chang2017deep}. 
		Training is batch-wise and from scratch.
		The affinity matrix gradually exhibits a diagonal block structure with clear cluster separation. 
	}
	\label{fig_2D}
\end{figure}

Classical deep clustering models \cite{xie2016unsupervised,ghasedi2017deep,guo2017improved,caron2019unsupervised,HsuL18,xu2024investigating} commonly adopt cluster assignments obtained by $k$-means on data representations as training supervision.
A major challenge with this $k$-means-style supervision is that data representations are assumed to follow simple isotropic Gaussian distributions.
This assumption frequently fails for high-dimensional real-world data \cite{yang2017towards}, which typically exhibit intricate nonconvex cluster structures.
Such structures cannot be accurately captured by $k$-means, leading to suboptimal clustering results.

In fact, certain sophisticated clustering methods like spectral clustering \cite{shi2000normalized} excel at detecting nonconvex cluster structures, potentially providing better supervision.
Motivated by this, deep spectral clustering models \cite{shaham2018spectralnet,yang2019deep,huang2019multi,huang2019multispectralnet,duan2019improving,affeldt2020spectral,ye2021spectral} have been proposed, which involve two separate stages---spectral embedding learning and $k$-means clustering. 
The spectral embedding learning is supervised by an affinity matrix, which is constructed within the embedding space of pre-trained networks like AutoEncoder \cite{bengio2006greedy} over the whole dataset.
Subsequently, the learned spectral embeddings are grouped by $k$-means.
While these models have achieved improved results, they can hardly scale to large datasets as constructing the whole affinity matrix leads to quadratic computational complexity.
Additionally, their clustering performance highly relies on the quality of pre-trained networks.
Furthermore, their disjoint learning process cannot provide optimal embeddings for clustering.

To handle the above issues, we propose a \textbf{Boot}strapped\footnote{In this paper, the term \textit{bootstrap} is used in its idiomatic sense rather than the statistical sense.} deep \textbf{S}pectral \textbf{C}lustering model (BootSC) using optimal transport.
Compared with existing models, our BootSC offers three key advantages: 
(1) It needs only mini-batch training \cite{lecun2015deep} and thereby enables scalability;
(2) It can learn a clustering-specific affinity matrix from scratch without requiring any pre-trained networks (see Figure \ref{fig_2D});
(3) It seamlessly integrates and jointly learns all stages of spectral clustering in an end-to-end manner.

\begin{figure}[t]
	\centering	
	\includegraphics[width=\columnwidth]{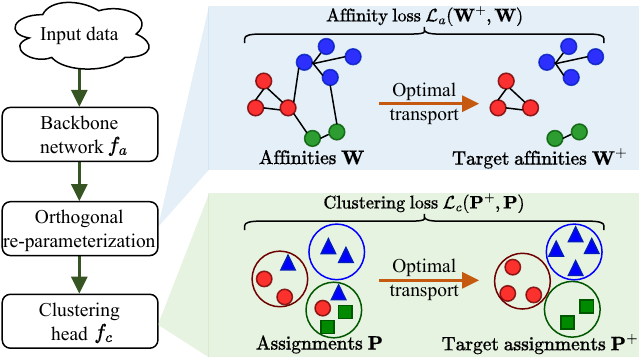}
	\caption{Illustration of our end-to-end BootSC, which simultaneously learns an affinity matrix and a cluster assignment matrix. We bootstrap the two matrices using effective and efficient optimal-transport-derived supervision for optimization.
	}
	\label{fig_framework}
\end{figure}

Our BootSC is end-to-end and learns in a bootstrapped manner (Figure \ref{fig_framework}). 
Given a mini-batch of samples, BootSC predicts an affinity matrix that captures pairwise similarities between samples and a cluster assignment matrix that indicates which cluster each sample belongs to. 
We leverage the spectral embedding and $k$-means clustering objectives as priors to bootstrap the two matrices.
Specifically, we compute two target matrices that respectively optimize the two objectives as supervision.
This optimization is a specific optimal transport problem, which can be efficiently solved with the off-the-shelf Sinkhorn’s fixed point iteration \cite{cuturi2013sinkhorn}.
Subsequently, we align the predicted matrices with the target matrices to update model parameters.
On the other hand, the orthogonality of spectral embeddings is crucial for spectral clustering.
Existing models commonly use a QR-decomposition-based layer \cite{shaham2018spectralnet} for orthogonalization.
However, this layer proves to fail in end-to-end joint training due to significant semantic inconsistency between the original and orthogonalized embeddings (Figure \ref{fig_orth} and Figure \ref{fig_inconsistency}).
To address this issue, we introduce an orthogonal re-parameterization technique that minimizes the semantic inconsistency through orthogonal Procrustes \cite{schonemann1966generalized}, thereby facilitating end-to-end joint training.

Our contributions are summarized as follows:
\begin{itemize}
	\item We present BootSC, a pioneering end-to-end deep spectral clustering model. 
	To our knowledge, BootSC is the first model that jointly learns all stages of spectral clustering without requiring any pre-trained networks.
	\item We uncover that both the objectives of spectral embedding and $k$-means clustering can be formulated as optimal transport problems, thereby facilitating effective and efficient joint optimization.
	\item We propose an orthogonal re-parameterization technique that enables semantically-consistent orthogonalization of network output, enhancing the discrimination power.
\end{itemize}

Our BootSC significantly outperforms state-of-the-art clustering methods on five benchmark image datasets. Notably, on the challenging ImageNet-Dogs dataset \cite{chang2017deep}, BootSC accomplishes a substantial 16\% performance improvement in terms of the NMI values over the most competitive baseline.

\section{Related Work}

\subsection{Deep Clustering} 
Deep clustering exploits the power of deep learning to facilitate clustering.
Classical works combine deep neural networks with traditional clustering methods such as $k$-means  \cite{caron2018deep,xie2016unsupervised,ghasedi2017deep,guo2017improved,caron2019unsupervised,HsuL18,xu2024investigating,guo2021deep},  subspace clustering \cite{cai2022efficient,ji2017deep,WangCGZJ21,SunWZLGZZ22}, agglomerative clustering \cite{yang2016joint,LiLWZH20}, and spectral clustering \cite{shaham2018spectralnet,yang2019deep,huang2019multi,sun2022network,huang2019multispectralnet,duan2019improving,affeldt2020spectral,guo2024deep,ye2021spectral}.
Recently, some works focus on optimizing clustering by neighborhood consistency \cite{dang2021nearest,wu2019deep,van2020scan} and mutual information maximization \cite{ji2019invariant,WangCFZ23,yang2022learning}.
Inspired by contrastive learning \cite{chen2020simple,he2020momentum}, contrastive clustering \cite{dang2021nearest,li2021contrastive,shen2021you,PengLLQCL24,WuZRHPHHH24} have exhibited promising clustering results.
In very recent, some literature \cite{kwon2023image,zhang2023clusterllm,viswanathan2023large}  has started to employ large language models to improve clustering.
Kwon \textit{et al.} \cite{kwon2023image} introduce a novel image clustering paradigm that produces diverse clustering results for a given image dataset according to the textual criteria provided by users.
Zhang \textit{et al.}\cite{zhang2023clusterllm}  query ChatGPT \cite{achiam2023gpt} with prompts such as ``do sample $\mathbf{x}_i$ and $\mathbf{x}_j$ belong to the same cluster'', and then refine embeddings based on the ChatGPT answers.
These methods commonly adopt the Euclidean distance-based measure for cluster detection, whereas Euclidean distance can be invalid for highly complex data.
In contrast, our \methodname is a connectivity-based clustering method that effectively groups data according to data similarities.

\subsection{Traditional Spectral Clustering} 
Spectral clustering formulates the clustering task as a graph Min-Cut problem.  
However, directly solving the Min-Cut problem often yields degenerate clustering results where a single vertex forms a cluster.
To address this challenge, Shi \textit{et al.} \cite{shi2000normalized} and Hagen \textit{et al.}\cite{hagen1992new} respectively incorporate different penalties into the Min-Cut problem for promoting balanced cluster sizes.
For a detailed explanation of these modifications, please refer to the comprehensive tutorial \cite{von2007tutorial}.
These modifications have popularized spectral clustering and fostered numerous further improvements.

Some works \cite{li2024deep,zhao2023spectral,li2018dynamic,huang2012affinity,ren2020consensus,TangZLLWZW19} focus on constructing better affinity matrices to improve spectral clustering.
Zelnik \textit{et al.} \cite{zelnik2004self} propose automatically determining the temperature parameter of the Gaussian kernel according to the local neighborhood statistics of each sample.
Zhu \textit{et al.} \cite{zhu2014constructing} construct robust affinity matrices using the hierarchical structure of random forests \cite{breiman2001random}.
On the other hand, several works aim to reduce the computational cost of spectral embedding via Nystr\"om approximation \cite{fowlkes2004spectral}, power iteration \cite{lin2010power,ye2016fuse}, landmarks \cite{cai2014large}, label propagation \cite{wang2020large}, and stochastic gradient optimization \cite{han2017mini}.
Additionally, some works replace $k$-means clustering with alternative methods to detect clusters, such as spectral rotation \cite{shi2003multiclass,huang2013spectral,chen2017scalable} and pivoted QR-decomposition \cite{damle2019simple}.
Another line of works \cite{wang2021fast,huang2013spectral,yang2016unified,kang2018unified,WanZSYYY24,damle2019simple,wang2020large,zhong2021improved} explore the joint optimization of spectral embedding and clustering.
For example, Huang \textit{et al.} \cite{huang2013spectral} bridge the two steps using a rotation matrix, aiming to minimize the reconstruction error between the rotated spectral embedding matrix and the cluster assignment matrix. 
SE-ISR \cite{wang2021fast} further proposes a parameter-free method to trade off the Min-Cut cost and the reconstruction error.
A problem of these shallow methods is that they suffer from high-dimensional complex data due to their inferior representation capability.

\subsection{Deep Spectral Clustering} 
Deep spectral clustering integrates traditional spectral clustering with deep neural networks for effective clustering.
Tian \textit{et al.} \cite{tian2014learning} observe the similarity between the optimization objectives of autoencoders and spectral clustering.
They thus propose replacing spectral embedding with a deep autoencoder that is trained to reconstruct the pre-defined affinity matrix. 
However, obtaining such an affinity matrix can be challenging for complex data, and the matrix size can become prohibitively large for large datasets.
SpectralNet \cite{shaham2018spectralnet} proposes to directly embed raw data into the eigenspace of a given affinity matrix. 
SpectralNet has inspired numerous extensions.
Yang \textit{et al.} \cite{yang2019deep} enhance the robustness of SpectralNet's embeddings using a dual autoencoder.
\cite{huang2019multi,huang2019multispectralnet} respectively extend SpectralNet to handle multi-view data.
Another line of works \cite{ye2021spectral, duan2019improving, affeldt2020spectral} highlight the benefits of joint optimization and propose joint spectral embedding learning and clustering.
However, these methods only focus on a subset of stages within the spectral clustering pipeline, while our BootSC unifies and jointly optimizes the entire pipeline.

\subsection{Sinkhorn's Fixed Point Iteration}
We employ Sinkhorn's fixed point iteration for affinity learning and $k$-means clustering. 
This algorithm is rooted in the theory of entropy regularized optimal transport \cite{cuturi2013sinkhorn}, which iteratively rescales a matrix to achieve a doubly stochastic form.
Owing to its efficiency and differentiability, this algorithm has been widely integrated into deep learning frameworks such as
domain adaptation \cite{flamary2016optimal}, unsupervised representation learning \cite{asano2019self}, contrastive learning \cite{caron2020unsupervised}, and object detection \cite{ge2021ota}.
In contrast, we uniquely focus on extending it to spectral clustering.

	\section{Preliminaries}
\subsection{Spectral Clustering}
Let's consider the problem of grouping $N$ samples $\mathbf{X}=[\mathbf{x}_1,\cdots,\mathbf{x}_N]$ into $K$ disjoint clusters.
Traditional shallow spectral clustering involves three stages.
First, the affinity matrix $\mathbf{W}=\mathbf{D}^{-1}\mathbf{S}\in \mathbb{R}^{N\times N}$ is constructed, where element $S_{ij}=\exp(-{\|\mathbf{x}_i-\mathbf{x}_j\|^2}/(2{\sigma^2}))$ measures the similarity between sample $\mathbf{x}_i$ and $\mathbf{x}_j$, $\sigma$ is the Gaussian kernel bandwidth, and $\mathbf{D}$ is a diagonal matrix with $D_{ii}=\sum_{j=1}^N S_{ij}$.
Subsequently, the spectral embedding objective \cite{meilua2001random} is maximized to compute spectral embeddings $\mathbf{Z}\in \mathbb{R}^{N\times K}$:
\begin{equation}
	\underset{\mathbf{Z}\in \mathbb{R}^{N\times K}}{\max}\operatorname{Tr}(\mathbf{Z}^\intercal \mathbf{W}\mathbf{Z})~ \text{s.t.}~ \mathbf{Z}^\intercal\mathbf{Z}=\mathbf{I}_K,
	\label{eq_ytly}
\end{equation}
where $\operatorname{Tr}(\cdot)$ denotes the trace function and $\mathbf{I}_K$ is the $K$-dimensional identity matrix.
The solution for $\mathbf{Z}$ comprises the $K$ largest eigenvectors of $\mathbf{W}$, capturing the underlying cluster structures.
Finally, $k$-means clustering is applied to the rows of $\mathbf{Z}$ for final cluster assignments.

\subsection{Optimal Transport}
Supposing there are $N_s$ suppliers holding $\mathbf{s}=[s_1,\cdots,s_{N_s}]$ units of goods and $N_d$ demanders needing $\mathbf{d}=[d_1,\cdots,d_{N_d}]$ units of goods,	and the transportation cost for each unit of good from the $i$-th supplier to the $j$-th demander is denoted by $C_{ij}$, optimal transport aims to find an optimal transportation plan $\mathbf{Q}$ that minimizes the total transportation cost:
\begin{equation}
	\min_{\mathbf{Q}\in \mathbb{R}^{N_s\times N_d}_+}\sum_{i=1}^{N_s}\sum_{j=1}^{N_d} {Q}_{ij}C_{ij}~\text{s.t.}~\mathbf{Q}\mathbf{1}_{N_d}=\mathbf{s},~{\mathbf{Q}}^\intercal\mathbf{1}_{N_s}=\mathbf{d},
	\label{eq_ot}
\end{equation}
where $\mathbf{1}_{N_d}$ is the $N_d$-dimensional all-ones vector.
The two constraints ensure that all goods from the suppliers are transported to the demanders.
This problem can be efficiently solved using Sinkhorn's fixed point iteration \cite{cuturi2013sinkhorn}.

\section{Method: BootSC}
\subsection{Bootstrapping Affinities}
\label{sec_affinities}
The shortcomings of traditional spectral clustering are that its separate stages hamper joint optimization and its shallow nature limits the representation capability.
Our BootSC aims to resolve these challenges (Figure \ref{fig_framework}). 
Specifically, we utilize a deep neural network $f_{{a}}$ to map each sample $\mathbf{x}$ into a $D$-dimensional space $\mathbf{z}=f_{{a}}(\mathbf{x})$.
Given  spectral embeddings $\mathbf{Z}=[\mathbf{z}_1,\cdots,\mathbf{z}_N]\in \mathbb{R}^{N\times D}$, we orthogonalize $\mathbf{Z}$ column-wise using a re-parameterization technique (described in Section \ref{sec_ort}) and $\ell_2$-normalize each row of $\mathbf{Z}$.
We model the affinity matrix $\mathbf{W}$ as a doubly stochastic matrix:
\begin{equation}
	W_{ij}=\frac{\exp(\mathbf{z}_i\mathbf{z}_j^{\intercal} /\tau)}{\sum_{j =1 }^N \exp(\mathbf{z}_{i} \mathbf{z}_j^{\intercal}/\tau)},
	\label{eq_w}
\end{equation}
where $W_{ij}$ indicates the affinity between sample $\mathbf{x}_i$ and $\mathbf{x}_j$, $\mathbf{z}_i \mathbf{z}_j^{\intercal}$ measures the cosine similarity between $\mathbf{z}_i$ and $\mathbf{z}_j$ as each $\mathbf{z}$ is $\ell_2$-normalized, and $\tau$ is the temperature parameter of the softmax function that is important for clustering multi-scale data \cite{zelnik2004self}.
Following \cite{radford2021learning}, we learn $\tau$ as a trainable parameter to avoid tedious hyperparameter tuning.

We aim to maximize the spectral embedding objective in Equation (\ref{eq_ytly}) to learn affinities $\mathbf{W}$ and spectral embeddings $\mathbf{Z}$.
Equation (\ref{eq_ytly}) can be written as:
\begin{equation}
	\operatorname{Tr}(\mathbf{Z}^\intercal \mathbf{W}\mathbf{Z})=\operatorname{Tr}( \mathbf{W}\mathbf{Z}\mathbf{Z}^\intercal)\stackrel{\text{def}}{=}\operatorname{Tr}( \mathbf{W}\mathbf{G})=\sum_{i=1}^N\sum_{j=1}^N W_{ij}G_{ij},
\end{equation}
where the first equality uses the cyclic property of the trace function, the second equality defines the Gram matrix $\mathbf{G}$ as $\mathbf{Z}\mathbf{Z}^\intercal$, and the final equality uses the symmetry of $\mathbf{G}$.
Hence, our learning objective can be rewritten as:
\begin{equation}
	\min-\sum_{i=1}^N\sum_{j=1}^N W_{ij}G_{ij}.
	\label{eq_wzzt}
\end{equation}

Unlike traditional spectral clustering that only optimizes $\mathbf{Z}$ under a fixed $\mathbf{W}$, BootSC jointly learns both $\mathbf{W}$ and $\mathbf{Z}$ through a bootstrap procedure that alternately performs: (1) the target estimation step to refine $\mathbf{W}$, and (2) the parameter update step to refine $\mathbf{Z}$. 

\textbf{In the target estimation step,} we freeze $\mathbf{G}$ (i.e., freeze $\mathbf{Z}$) and find a target $\mathbf{W}$ that minimizes the spectral embedding cost in Equation \eqref{eq_wzzt} as supervision.
Equation \eqref{eq_wzzt} suffers from the trivial solution $\mathbf{W}=\mathbf{I}_N$ that only captures self-affinities and is ineffective for training (Figure \ref{fig_trivialsolution}).
To address this issue, we compel the model to mine meaningful cross-affinities in the off-diagonal elements by simply removing the diagonal elements of $\mathbf{W}$ and $\mathbf{G}$ \footnote{For simplicity, we continue to use the notations $\mathbf{W}$ and $\mathbf{G}$ to represent their off-diagonal versions.}: 
\begin{equation}
	\min_{\mathbf{W}}-\sum_{i=1}^N\sum_{j=1}^{N-1} {W}_{ij}G_{ij} ~\text{s.t.}~\mathbf{W}\mathbf{1}_{N-1}=\mathbf{1}_N,~{\mathbf{W}}^\intercal\mathbf{1}_N=\mathbf{1}_{N-1},
	\label{wq_wplus}
\end{equation}
where the two constraints ensure the resulting solution remains doubly stochastic, consistent with the modeled affinity matrix. 


Next, we tackle Equation \eqref{wq_wplus}, which seems a challenging linear programming problem.
Fortunately, it can be treated as an optimal transport problem, enabling efficient resolution with well-established solvers.
Concretely, we set $-\mathbf{G}$  as the transportation cost for a hypothetical transport task and $\mathbf{W}\in\mathcal{W}$ as the corresponding transportation plan, where $\mathcal{W}= \{\mathbf{W}\in \mathbb{R}^{N\times (N-1)}_+\mid \mathbf{W}\mathbf{1}_{N-1}=\mathbf{1}_N,~{\mathbf{W}}^\intercal\mathbf{1}_N=\mathbf{1}_{N-1}\}$ is the transportation polytope.
Hence, finding the target affinity matrix is equivalent to solving for the optimal transportation plan that minimizes the total transportation cost.
We tackle this problem using Sinkhorn's fixed point iteration \cite{cuturi2013sinkhorn}, which adds an entropic regularization for efficient resolution:
\begin{equation}
	\min_{\mathbf{W} \in \mathcal{W}}\underbrace{-\sum_{i=1}^N\sum_{j=1}^{N-1} {W}_{ij}G_{ij}}_{\text{transportation cost}} -\eta\underbrace{\sum_{i=1}^{N}\sum_{j=1}^{N-1}W_{ij}\log W_{ij}}_{\text{entropic regularization}}, 
	\label{wq_wplus2}
\end{equation}
where $\eta$ is the trade-off parameter. We evaluate the effect of different $\eta$ values (including $\eta=0$) on clustering performance in Section \ref{sec_sk}.
The solution to Equation \eqref{wq_wplus2} denoted by $\mathbf{W}^+$ is given by:
\begin{equation}
	{\mathbf{W}^+}=\operatorname{Diag}(\boldsymbol{\alpha})  \exp({\mathbf{G}}/{\eta }) \operatorname{Diag}(\boldsymbol{\beta}), 
	\label{eq_wplus}
\end{equation}
where $\operatorname{Diag}(\cdot)$ denotes a diagonal matrix constructed from the given vector and $\exp(\cdot)$ denotes the element-wise exponential operation.
$\boldsymbol{\alpha}\in \mathbb{R}^N$ and $\boldsymbol{\beta}\in \mathbb{R}^{N-1}$ are re-normalization vectors obtained by alternately performing the following updates:
\begin{equation}
	\boldsymbol{\alpha}  \leftarrow 1/ \bigl( \exp({\mathbf{G}}/{\eta })\boldsymbol{\beta} \bigr);  ~ \boldsymbol{\beta}  \leftarrow 1/\bigl( \exp({\mathbf{G}^\intercal }/{\eta })\boldsymbol{\alpha}\bigr).
\end{equation}

\textbf{In the parameter update step,}  we refine spectral embeddings $\mathbf{Z}$ by training the model to minimize the cross-entropy loss between the modeled affinities $\mathbf{W}$ and the target affinities $\mathbf{W}^+$ (termed affinity loss): 
\begin{equation}
	\begin{aligned}
		&\mathcal{L}_{{a}}(\mathbf{W}^+,\mathbf{W})=-\sum_{i=1}^N\sum_{j=1}^{N-1}W^+_{ij}\log W_{ij}\\
		&=\underbrace{-\sum_{i=1}^N\sum_{j=1}^{N-1}{W}^+_{ij}\mathbf{z}_j^\intercal \mathbf{z}_i/\tau}_{\text{spectral embedding cost}} +\underbrace{\sum_{i=1}^{N}\log \sum_{j=1}^{N-1}\exp(\mathbf{z}_{j}^\intercal \mathbf{z}_i/\tau)}_{\text{balanced affinity distribution}},
	\end{aligned}
	\label{loss_spectral}
\end{equation}
where the first term equivalently maximizes the scaled spectral embedding objective $\operatorname{Tr}(\mathbf{Z}^\intercal \mathbf{W}^+\mathbf{Z})/\tau$ and the second term uniformly separates all spectral embeddings over the spherical embedding space, promoting a balanced affinity distribution. 
The partial derivative of $\mathcal{L}_{{a}}$ with respect to $\mathbf{Z}$ is:
\begin{equation}
	\frac{\partial\mathcal{L}_{{a}}}{\partial \mathbf{Z}}=\frac{2}{\tau}({\mathbf{W}}-{\mathbf{W}^+})\mathbf{Z}=\mathbf{0}.
\end{equation}  
This derivative indicates that as training progresses, the spectral embeddings converge to the eigenvectors of the nullspace of the residual affinity matrix $\mathbf{W}-{\mathbf{W}^+}$.

\subsection{Bootstrapping Cluster Assignments} 
To achieve end-to-end spectral clustering, we extend the above bootstrap procedure to $k$-means clustering.
Specifically, we append a clustering head $f_{{c}}$ parametrized by $\ell_2$-normalized prototypes $\{\boldsymbol{\mu}_k\}_{k=1}^K$ onto the backbone network.
The probabilistic cluster assignment of sample $\mathbf{x}_i$ to the $k$-th cluster is predicted by:
\begin{equation}
	P_{ik}=\frac{\exp(\boldsymbol{\mu}_k^\intercal \mathbf{z}_i/\tau)}{\sum_{k=1}^K \exp(\boldsymbol{\mu}_k^\intercal \mathbf{z}_i/\tau)}.
	\label{eq_assignment}
\end{equation}

\begin{figure*}[t]
	\centering
	\newcommand{\scale}{0.42}
	\subfigure[Initial embeddings]{
		\begin{tikzpicture}[scale=\scale]
			\draw[-latex] (-5.3,0) -- (5.3,0) node[below] {};
			\draw[-latex] (0,-1.2) -- (0,5.3) node[left]{};
			\draw[thick,-latex, red,dashed] (0,0) -- ($(30:5)$) coordinate (A);
			\draw[thick, -latex, red] (0,0) -- ($(160:5)$)  coordinate (B);
			\node at ( 0.8660254  *5-1, 0.5 *5+0.5)[font=\scriptsize, black] {$\mathbf{z}_2$=[0.87, 0.5]};
			\node at ( -0.93969262*5+1.6,  0.34202014*5+0.5)[font=\scriptsize, black] {$\mathbf{z}_1$=[-0.94, 0.34]};
		\end{tikzpicture}
	}
	\label{fig_orth_a}
	\subfigure[QR-decomposition (SpectralNet \cite{shaham2018spectralnet})]{
		\begin{tikzpicture}[scale=\scale]
			\draw[-latex] (-5.3,0) -- (5.3,0) node[below] {};
			\draw[-latex] (0,-1.2) -- (0,5.3) node[left]{};
			\draw[thick,-latex, opacity=0.2, red,dashed] (0,0) -- ($(30:5)$) coordinate (A);
			\draw[thick, -latex, opacity=0.2, red] (0,0) -- ($(160:5)$)  coordinate (B);
			\draw[thick,-latex, blue,dashed] (0,0) -- (-0.67769547*5,0.73534267*5) coordinate (C);
			\draw[thick, -latex, blue] (0,0) -- (0.73534267*5,  0.67769547*5)  coordinate (D);
			\node at (-0.67769547*5+1,0.73534267*5+0.5)[font=\scriptsize, black] {$\mathbf{z}_2$=[-0.68, 0.74]};
			\node at (0.73534267*5-1,  0.67769547*5+0.5)[font=\scriptsize, black] {$\mathbf{z}_1$=[0.74, 0.68]};
			\coordinate (E) at ($(C)!0.5!(D)$);
			\coordinate (F) at ($(0,0)!0.2!(E)$);
			\draw[] (F) -- ($(0,0)!(F)!(C)$);
			\draw[] (F) -- ($(0,0)!(F)!(D)$);
			\draw[thick,->] (1,-0.5) -- (0.2,0.1);
			\node at (0.9,-0.6)[font=\scriptsize,right,black] {\textbf{Orthogonality}};
		\end{tikzpicture}
	}
	\label{fig_orth_b}
	\subfigure[Orthogonal Procrustes (Ours)]{
		\begin{tikzpicture}[scale=\scale]
			\draw[-latex] (-5.3,0) -- (5.3,0) node[below] {};
			\draw[-latex] (0,-1.2) -- (0,5.3) node[left]{};
			\draw[thick,-latex, opacity=0.2, red,dashed] (0,0) -- ($(30:5)$) coordinate (A);
			\draw[thick, -latex, opacity=0.2, red] (0,0) -- ($(160:5)$)  coordinate (B);
			\draw[thick,-latex, blue,dashed] (0,0) --(0.64278761*5,  0.76604444*5)  coordinate (C);
			\draw[thick, -latex, blue] (0,0) -- (-0.76604444*5,  0.64278761*5)   coordinate (D);
			\node at (0.84278761*5-1,  0.76604444*5+0.5)[font=\scriptsize, black] {$\mathbf{z}_2$=[0.64, 0.77]};
			\node at (-0.86604444*5+1.5,  0.64278761*5+0.5)[font=\scriptsize, black] {$\mathbf{z}_1$=[-0.77, 0.64]};
			\draw[|<->|] ($(A)$) -- node[midway, above,sloped,font=\scriptsize, black] {$\min$} ($(C)$); 
			\draw[|<->|] ($(B)$) -- node[midway, above,sloped,font=\scriptsize, black] {$\min$} ($(D)$); 
			\coordinate (E) at ($(C)!0.5!(D)$);
			\coordinate (F) at ($(0,0)!0.2!(E)$);
			\draw[] (F) -- ($(0,0)!(F)!(C)$);
			\draw[] (F) -- ($(0,0)!(F)!(D)$);
			\draw[thick,->] (1,-0.5) -- (0.2,0.1);
			\node at (0.9,-0.6)[font=\scriptsize,right,black] {\textbf{Orthogonality}};
			\draw[thick,->] (-4.1,1.) -- (-4.1,2.4);
			\node at (-1.6,0.65)[font=\scriptsize,left,black] {\textbf{Inconsistency}};
		\end{tikzpicture}
	}
	\caption{Geometric interpretation of different orthogonalization methods. 
		(a) Initial 2D embeddings $\mathbf{Z}=[\mathbf{z}_1,\mathbf{z}_2]$ of two samples that are not orthogonal.
		(b) Orthogonalized embeddings $\mathbf{Z}_\text{new}$ generated by \cite{shaham2018spectralnet} exhibit large semantic inconsistency $\|\mathbf{Z}-\mathbf{Z}_\text{new}\|_F=2.31$.
		(c) Our orthogonalized embeddings $\mathbf{Z}_\text{new}$ have minimal semantic inconsistency $\|\mathbf{Z}-\mathbf{Z}_\text{new}\|_F=0.49$.
	}
	\label{fig_orth}
\end{figure*}
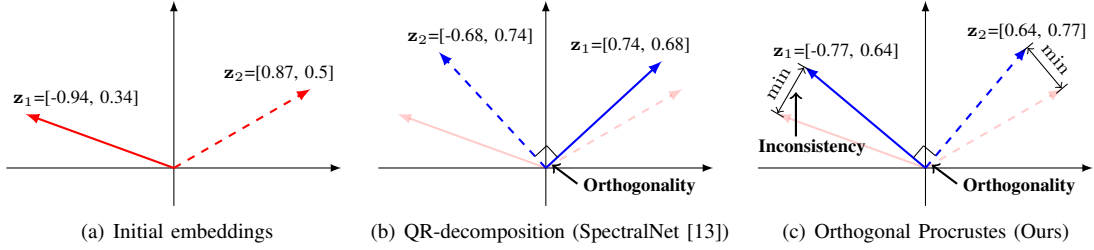

We bootstrap cluster assignments using the soft $k$-means (also referred to as fuzzy clustering \cite{bezdek2013pattern}) objective as a prior, which is defined as: 
\begin{equation}
	\min_{\mathbf{P}\in [0,1]^{N\times K}} \sum_{i=1}^N\sum_{k=1}^{K} P_{ik}\left\|\mathbf{z}_i-\boldsymbol{\mu}_k\right\|^2 ~\text{s.t.}~ \mathbf{P}\boldsymbol{1}_K=\boldsymbol{1}_N.
	\label{eq_km}
\end{equation}
By defining ${H}_{ik}\stackrel{\text{def}}{=}\mathbf{z}_i\boldsymbol{\mu}_k^\intercal$, the above objective can be equivalently written as:
\begin{equation}
	\min_{\mathbf{P}\in [0,1]^{N\times K}}- \sum_{i=1}^N\sum_{k=1}^{K} P_{ik}H_{ik}~\text{s.t.}~ \mathbf{P}\boldsymbol{1}_K=\boldsymbol{1}_N,\frac{K}{N}{\mathbf{P}}^\intercal\boldsymbol{1}_N=\boldsymbol{1}_K,
	\label{eq_p}
\end{equation}
The final constraint encourages equally-sized clusters, where the uniform prior can be replaced with any desired distribution if prior knowledge about cluster sizes is available.
Using the bootstrap procedure described in Section \ref{sec_affinities}, in the target estimation step, we treat Equation \eqref{eq_p} as an optimal transport problem and find the target cluster assignments $\mathbf{P}^+$:
\begin{equation}
	{\mathbf{P}^+}=\operatorname{Diag}(\boldsymbol{\alpha}^*)\exp({\mathbf{H}}/{\eta }) \operatorname{Diag}(\boldsymbol{\beta}^*). 
	\label{eq_pplus}
\end{equation}
In the parameter update step, we minimize the clustering loss:
\begin{equation}
	\mathcal{L}_{{c}}(\mathbf{P}^+,\mathbf{P})=-\sum_{i=1}^N\sum_{k=1}^{K}P^+_{ik}\log P_{ik}.
	\label{loss_cluster}
\end{equation}

\subsection{Orthogonal Re-parameterization Technique}
\label{sec_ort}
The orthogonality of spectral embeddings is important for identifying multiple clusters ($K>2$).
However, standard deep neural networks cannot inherently guarantee orthogonality in their output.
Most existing models map the output $\mathbf{Z}$ into an orthogonalized counterpart $\mathbf{Z}_\text{new}$, computed as  $\mathbf{Z}_\text{new}=\mathbf{Z}\mathbf{R}^{-1}$, where $\mathbf{R}$ is derived from the QR-decomposition of $\mathbf{Z} = \mathbf{Q}\mathbf{R}$ \cite{shaham2018spectralnet}.
However, experiments indicate that this method cannot be applied to end-to-end deep spectral clustering, which is because the semantic inconsistency---defined as $\|\mathbf{Z}-\mathbf{Z}_\text{new}\|_F$ in this paper---is not considered and could be arbitrarily large.
Such large semantic inconsistency leads to training instability in the clustering head (Figure \ref{fig_inconsistency}).

To handle the above issue, we propose a re-parameterization technique that guarantees orthogonality while minimizing the semantic inconsistency (Figure \ref{fig_orth}).
We leverage orthogonal Procrustes \cite{schonemann1966generalized} for orthogonalization, which finds the orthogonal matrix that most closely maps a given matrix to another.
\begin{lemma}[Orthogonal Procrustes \cite{schonemann1966generalized}]\label{op}
	For two matrices $\mathbf{A}$ and  $\mathbf{B}$, with singular value decomposition (SVD) $\mathbf{B}\mathbf{A}^\intercal=\mathbf{U}\mathbf{\Sigma}\mathbf{V}^\intercal$, we have:
	\begin{equation}
		\underset{\mathbf{Q}}{\arg\min} \|\mathbf{Q}\mathbf{A}-\mathbf{B}\|_F^2 =\mathbf{U}\mathbf{V}^\intercal~\text{s.t.}~ \mathbf{Q}^\intercal\mathbf{Q}=\mathbf{I}.
	\end{equation}
\end{lemma}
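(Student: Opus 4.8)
\textbf{Proof proposal for Lemma~\ref{op} (Orthogonal Procrustes).}
The plan is to expand the Frobenius objective, reduce it to a single trace term that depends on $\mathbf{Q}$, and then maximize that trace over the orthogonal group using the SVD of $\mathbf{B}\mathbf{A}^\intercal$ together with a von~Neumann--type inequality argument. First I would write $\|\mathbf{Q}\mathbf{A}-\mathbf{B}\|_F^2 = \operatorname{Tr}\big((\mathbf{Q}\mathbf{A}-\mathbf{B})^\intercal(\mathbf{Q}\mathbf{A}-\mathbf{B})\big)$ and expand into $\operatorname{Tr}(\mathbf{A}^\intercal\mathbf{Q}^\intercal\mathbf{Q}\mathbf{A}) - 2\operatorname{Tr}(\mathbf{B}^\intercal\mathbf{Q}\mathbf{A}) + \operatorname{Tr}(\mathbf{B}^\intercal\mathbf{B})$. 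Using the constraint $\mathbf{Q}^\intercal\mathbf{Q}=\mathbf{I}$, the first term becomes $\operatorname{Tr}(\mathbf{A}^\intercal\mathbf{A})$, which is independent of $\mathbf{Q}$; the third term is likewise constant. Hence minimizing the objective is equivalent to maximizing $\operatorname{Tr}(\mathbf{B}^\intercal\mathbf{Q}\mathbf{A})$, and by the cyclic property of the trace this equals $\operatorname{Tr}(\mathbf{Q}\mathbf{A}\mathbf{B}^\intercal) = \operatorname{Tr}(\mathbf{Q}(\mathbf{B}\mathbf{A}^\intercal)^\intercal)$, i.e. $\operatorname{Tr}(\mathbf{Q}\mathbf{V}\mathbf{\Sigma}\mathbf{U}^\intercal)$ after substituting the SVD $\mathbf{B}\mathbf{A}^\intercal=\mathbf{U}\mathbf{\Sigma}\mathbf{V}^\intercal$.

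Next I would push the orthogonal factors of the SVD into a single unknown orthogonal matrix. Set $\mathbf{M} \stackrel{\text{def}}{=} \mathbf{U}^\intercal\mathbf{Q}\mathbf{V}$, which is orthogonal whenever $\mathbf{Q}$ is (a product of orthogonal matrices). Then $\operatorname{Tr}(\mathbf{Q}\mathbf{V}\mathbf{\Sigma}\mathbf{U}^\intercal) = \operatorname{Tr}(\mathbf{U}^\intercal\mathbf{Q}\mathbf{V}\mathbf{\Sigma}) = \operatorname{Tr}(\mathbf{M}\mathbf{\Sigma}) = \sum_i \sigma_i M_{ii}$, where the $\sigma_i\ge 0$ are the singular values on the diagonal of $\mathbf{\Sigma}$. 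Since $\mathbf{M}$ is orthogonal, each entry satisfies $|M_{ii}|\le 1$, so $\sum_i \sigma_i M_{ii}\le \sum_i \sigma_i$, with equality attained when $M_{ii}=1$ for every $i$, i.e. when $\mathbf{M}=\mathbf{I}$. Unwinding the substitution, $\mathbf{M}=\mathbf{I}$ corresponds to $\mathbf{Q}=\mathbf{U}\mathbf{V}^\intercal$, which is indeed orthogonal, establishing that this choice achieves the maximum of the trace and therefore the minimum of $\|\mathbf{Q}\mathbf{A}-\mathbf{B}\|_F^2$.

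The main obstacle is the step $\sum_i \sigma_i M_{ii}\le \sum_i \sigma_i$: it requires the bound $|M_{ii}|\le 1$ for the diagonal of an orthogonal matrix, which follows because each row of $\mathbf{M}$ has unit Euclidean norm, and it requires care if $\mathbf{\Sigma}$ is rectangular or has zero singular values, in which case $\mathbf{M}$ need not equal $\mathbf{I}$ for all blocks but the trace is still maximized by $\mathbf{Q}=\mathbf{U}\mathbf{V}^\intercal$ (any ambiguity lies in directions annihilated by $\mathbf{\Sigma}$ and does not affect optimality). A minor subtlety is whether one wants the minimizer over all orthogonal $\mathbf{Q}$ versus over rotations only ($\det\mathbf{Q}=1$); the statement as given constrains only $\mathbf{Q}^\intercal\mathbf{Q}=\mathbf{I}$, so the unconstrained-determinant argument above suffices and no sign correction to $\mathbf{\Sigma}$ is needed. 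I would close by noting uniqueness holds when all $\sigma_i$ are strictly positive.
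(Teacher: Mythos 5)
The paper does not prove Lemma~\ref{op}; it cites Sch\"onemann (1966) and uses the result as a black box (the only in-line proof in that section is for the subsequent Theorem, which simply instantiates the lemma with $\mathbf{A}=\mathbf{I}$ and $\mathbf{B}=\mathbf{Z}$), so there is no paper proof to compare against. Your argument is the standard, correct proof of the orthogonal Procrustes solution: expand the Frobenius norm, drop the $\mathbf{Q}$-independent terms via $\mathbf{Q}^\intercal\mathbf{Q}=\mathbf{I}$, reduce to maximizing $\operatorname{Tr}(\mathbf{B}^\intercal\mathbf{Q}\mathbf{A})=\operatorname{Tr}\bigl(\mathbf{Q}(\mathbf{B}\mathbf{A}^\intercal)^\intercal\bigr)$, substitute the SVD, fold $\mathbf{U}$ and $\mathbf{V}$ into the orthogonal matrix $\mathbf{M}=\mathbf{U}^\intercal\mathbf{Q}\mathbf{V}$, and bound $\operatorname{Tr}(\mathbf{M}\mathbf{\Sigma})=\sum_i\sigma_i M_{ii}\le\sum_i\sigma_i$ using $|M_{ii}|\le 1$ and $\sigma_i\ge 0$, with equality at $\mathbf{M}=\mathbf{I}$, i.e.\ $\mathbf{Q}=\mathbf{U}\mathbf{V}^\intercal$. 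Every step checks out, and you correctly identify the two edge cases: zero singular values leave the maximizer non-unique (though $\mathbf{U}\mathbf{V}^\intercal$ is still optimal), and no sign correction of $\mathbf{\Sigma}$ is needed because the constraint here is $\mathbf{Q}^\intercal\mathbf{Q}=\mathbf{I}$ rather than the rotation-only $\det\mathbf{Q}=1$ of the Kabsch variant. This is precisely the proof the paper implicitly relies on.
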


With this lemma, we propose mapping the identity matrix to any network output, which leads to the following theorem.

\begin{theorem}
	For any network output $\mathbf{Z}$ with SVD $\mathbf{Z}=\mathbf{U}\mathbf{\Sigma}\mathbf{V}^\intercal$, its orthogonalized counterpart with minimal semantic inconsistency is given by $\mathbf{Z}_\text{new}=\mathbf{U}\mathbf{V}^\intercal$.
\end{theorem}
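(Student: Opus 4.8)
The plan is to reduce the theorem directly to Lemma~\ref{op} (orthogonal Procrustes) by an appropriate substitution, so essentially no new computation is needed. First I would make the notion of ``orthogonalized counterpart with minimal semantic inconsistency'' precise: we seek the matrix $\mathbf{Z}_\text{new}$ that is orthogonal (i.e.\ $\mathbf{Z}_\text{new}^\intercal\mathbf{Z}_\text{new}=\mathbf{I}$, matching the constraint $\mathbf{Z}^\intercal\mathbf{Z}=\mathbf{I}_K$ used in Equation~\eqref{eq_ytly}) and that minimizes the semantic inconsistency $\|\mathbf{Z}-\mathbf{Z}_\text{new}\|_F$. Since the Frobenius norm is nonnegative, minimizing $\|\mathbf{Z}-\mathbf{Z}_\text{new}\|_F$ is equivalent to minimizing $\|\mathbf{Z}-\mathbf{Z}_\text{new}\|_F^2$, putting the problem into exactly the squared-Frobenius form that Lemma~\ref{op} handles.

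The key step is the substitution. In Lemma~\ref{op}, the optimization variable $\mathbf{Q}$ is the orthogonal matrix, $\mathbf{A}$ is the matrix being mapped, and $\mathbf{B}$ is the target. I would set $\mathbf{A}=\mathbf{I}$ (the identity), $\mathbf{B}=\mathbf{Z}$, and identify the optimization variable $\mathbf{Q}$ with the sought $\mathbf{Z}_\text{new}$. Then $\|\mathbf{Q}\mathbf{A}-\mathbf{B}\|_F^2 = \|\mathbf{Z}_\text{new}-\mathbf{Z}\|_F^2 = \|\mathbf{Z}-\mathbf{Z}_\text{new}\|_F^2$, which is precisely the (squared) semantic inconsistency, and the constraint $\mathbf{Q}^\intercal\mathbf{Q}=\mathbf{I}$ is precisely the orthogonality requirement. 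Applying Lemma~\ref{op} with these choices, the SVD needed is that of $\mathbf{B}\mathbf{A}^\intercal = \mathbf{Z}\mathbf{I}^\intercal = \mathbf{Z}$; writing $\mathbf{Z}=\mathbf{U}\mathbf{\Sigma}\mathbf{V}^\intercal$ as in the theorem statement, the lemma yields $\arg\min \|\mathbf{Z}-\mathbf{Z}_\text{new}\|_F^2 = \mathbf{U}\mathbf{V}^\intercal$, which is the claimed expression.

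The main obstacle, such as it is, lies not in the algebra but in the bookkeeping around non-square $\mathbf{Z}$ and the exact form of the orthogonality constraint. Here $\mathbf{Z}\in\mathbb{R}^{N\times D}$ (or $N\times K$) with $N\gg D$, so ``$\mathbf{Z}_\text{new}$ orthogonal'' must mean column-orthonormality, and I would need to confirm that Lemma~\ref{op}'s statement (and the identity matrix $\mathbf{A}=\mathbf{I}$) is being read with compatible dimensions---i.e.\ $\mathbf{I}$ is the $D\times D$ identity, the SVD $\mathbf{Z}=\mathbf{U}\mathbf{\Sigma}\mathbf{V}^\intercal$ is the thin/reduced SVD with $\mathbf{U}\in\mathbb{R}^{N\times D}$, $\mathbf{\Sigma},\mathbf{V}\in\mathbb{R}^{D\times D}$, and that $\mathbf{U}\mathbf{V}^\intercal$ indeed satisfies $(\mathbf{U}\mathbf{V}^\intercal)^\intercal(\mathbf{U}\mathbf{V}^\intercal)=\mathbf{V}\mathbf{U}^\intercal\mathbf{U}\mathbf{V}^\intercal=\mathbf{V}\mathbf{V}^\intercal=\mathbf{I}$, using $\mathbf{U}^\intercal\mathbf{U}=\mathbf{I}$ and orthogonality of $\mathbf{V}$. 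Once the dimensional conventions are pinned down, the proof is a one-line invocation of the lemma; I would present it as: instantiate Lemma~\ref{op} with $\mathbf{A}=\mathbf{I}$ and $\mathbf{B}=\mathbf{Z}$, observe that $\mathbf{B}\mathbf{A}^\intercal=\mathbf{Z}$ has the given SVD, and conclude $\mathbf{Z}_\text{new}=\mathbf{U}\mathbf{V}^\intercal$.
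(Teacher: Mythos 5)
Your proposal is correct and follows exactly the same route as the paper: instantiate Lemma~\ref{op} with $\mathbf{A}=\mathbf{I}$ and $\mathbf{B}=\mathbf{Z}$, note that $\mathbf{B}\mathbf{A}^\intercal=\mathbf{Z}$, and read off $\mathbf{Z}_\text{new}=\mathbf{U}\mathbf{V}^\intercal$. Your added care about thin SVD dimensions and column-orthonormality is a sound clarification but not a different argument.
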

\begin{proof}
	Applying Lemma~\ref{op}, we set $\mathbf{A}$ as the identity matrix $\mathbf{I}$  and $\mathbf{B}$ as the network output $\mathbf{Z}$. According to the lemma, the best orthogonal mapping is given by $\mathbf{U}\mathbf{V}^\intercal$, where $\mathbf{U}$ and $\mathbf{V}$ are derived from the SVD of $\mathbf{B}\mathbf{A}^\intercal=\mathbf{Z}\mathbf{I}^\intercal=\mathbf{Z}$. This mapping ensures orthogonality while minimizing semantic inconsistency. Hence, the orthogonalized counterpart of $\mathbf{Z}$ with minimal semantic inconsistency is $\mathbf{Z}_\text{new}=\mathbf{U}\mathbf{V}^\intercal$.
\end{proof}

With the above theorem, we achieve semantically consistent orthogonalization using SVD.
A problem is that SVD may hinder stochastic gradient descent (SGD) based training as the gradients of singular vectors tend to be numerically unstable.
To address this issue, we re-parametrize spectral embeddings $\mathbf{Z}$ using the straight-through estimator \cite{bengio2013estimating}:
\begin{equation}
	\mathbf{Z}_\text{new}=\mathbf{Z}+\operatorname{sg}(\mathbf{Z}_\text{new}-\mathbf{Z}),
	\label{eq_rep}
\end{equation}
where $\operatorname{sg}(\cdot)$ denotes the stop gradient operation.
We only apply the orthogonal re-parameterization technique during training and omit it during testing as the spectral embeddings learned by BootSC naturally tend to be orthogonal (Figure \ref{fig_inconsistency}).

\begin{algorithm}[t]
\caption{BootSC}
\definecolor{codeblue}{rgb}{0.25,0.5,0.5}
\lstset{
	basicstyle=\fontsize{7.2pt}{7.2pt}\ttfamily\bfseries,
	commentstyle=\fontsize{7.2pt}{7.2pt}\color{codeblue},
	keywordstyle=\fontsize{7.2pt}{7.2pt},
}
\begin{lstlisting}[language=python]
# training
for x in loader: #  load a minibatch data
  x1, x2 = aug(x), aug(x)  # random augmentation
  z1, z2 = f_a(x1), f_a(x2) # spectral embeddings
  z1, z2 = orth_l2(z1), orth_l2(z2)  # re-parametrize
  # predicted and target affinities
  w1, w2 = cross_affinity(z1), cross_affinity(z2)
  w1p, w2p = sinkhorn(w1), sinkhorn(w2)
  # predicted and target cluster assignments
  p1, p2 = f_c(z1), f_c(z2)
  p1p, p2p = sinkhorn(p1), sinkhorn(p2)
  ta, tc = ta_log.exp(), tc_log.exp() # softmax temperature 
  la = cross_entropy(w1/ta, w2p) + cross_entropy(w2/ta, w1p)
  lc = cross_entropy(p1/tc, p2p) + cross_entropy(p2/tc, p1p)
  loss = la + lc*lamb
  loss.backword() # back-propagate
  update(f_a, f_c, ta_log, tc_log) # SGD update

# testing
for x in loader: 
  z = normalize(f_a(x), dim=1) # l2-normalized embeddings
  p = f_c(z).argmax(dim=1) # predicted cluster assignments

# function
def orth_l2(z): # orthogonalize and l2-normalize
  u, _, vh = svd(z) # orthogonal Procrustes
  q = matual(u, vh) 
  z = z + (q - z).detach() # straight-through estimator
  return normalize(z, dim=1) # l2-normalize

def cross_affinity(z): # compute off-diagonal affinity matrix
  w = matmul(z, z.T) 
  bs = z.size(0) # batch size
  diag_mask = (1 - eye(bs)).bool()
  return w[diag_mask].view(bs, -1) # remove diagonal elements

def sinkhorn(w, eta=0.05, isk=5): # sinkhorn's iteration
  w = exp(w.detach()/eta)
  for _ in range(isk):
    w/=sum(w, dim=0, keepdim=True) # column-normalize
    w/=sum(w, dim=1, keepdim=True) # row-normalize
  return w
\end{lstlisting}
\label{algo}
\end{algorithm}

\subsection{End-to-End Joint Training}
By simultaneously minimizing the affinity loss and the clustering loss, we jointly optimize the entire spectral clustering pipeline.
We use mini-batch training for stochastic optimization, which trades off accuracy for scalability.
As training is \textit{from scratch}, we may encounter the representation collapse issue where all samples are mapped into the same embedding.
Similar problems have been well-addressed in contrastive learning, and popular solutions include stop-gradient operation \cite{chen2021exploring}, momentum encoder \cite{he2020momentum}, and swapped prediction \cite{caron2020unsupervised}.
We adopt the swapped prediction for BootSC.
Specifically, given a mini-batch of samples $\mathbf{X}$, we randomly augment $\mathbf{X}$ twice to produce two sets of new samples $\mathbf{X}_1$ and $\mathbf{X}_2$.
Then, the total loss function of BootSC is computed as:
\begin{equation}
	\sum_{\substack{u, v \in \{1, 2\} \\ u \neq v}} \mathcal{L}_{{a}}(\mathbf{W}_u^+, \mathbf{W}_v) + \lambda\mathcal{L}_{{c}}(\mathbf{P}_u^+, \mathbf{P}_v),
	\label{eq_loss}
\end{equation}
where $\lambda$ is the trade-off parameter, and $\mathbf{W}_u, \mathbf{W}_u^+,\mathbf{P}_u,\mathbf{P}_u^+$ are obtained by using $\mathbf{X}_u$ as model input.

Algorithm \ref{algo} provides a PyTorch-like pseudo-code implementation of BootSC for training and testing.
The time complexity of BootSC is $\mathcal{O}(2(BD+B^2D+BD+B^2I_\text{SK}+B^2)+B^2)$  for training and $\mathcal{O}(BK+BKD)$ for testing, where $B$ is the batch size, $K$ is the number of clusters, $D$ is the embedding dimensionality, and $I_\text{SK}$ is the number of iterations in Sinkhorn’s fixed point iteration.

	\section{Experiments}
\noindent\textbf{Datasets.}
We use five benchmark image datasets for evaluation, including: CIFAR-10\cite{krizhevsky2009learning}, CIFAR-100 (20 superclasses) \cite{krizhevsky2009learning}, ImageNet-10 \cite{chang2017deep}, ImageNet-Dogs \cite{chang2017deep}, and Tiny-ImageNet \cite{le2015tiny}. 
Following prior work \cite{metaxas2023divclust}, we utilize both the training and testing sets of CIFAR-10 and CIFAR-100, but only the training set of ImageNet-based datasets.
Table \ref{tab_ds} shows the key statistics of the five datasets.
\begin{table}[htbp]
	\centering
	\caption{Dataset statistics.}
	\begin{tabular}{c|ccc}
		\toprule
		Dataset &Split& \# Sample & \# Cluster\\ \midrule
		CIFAR-10 &Train+Test& 60,000 & 10\\
		CIFAR-100 &Train+Test& 60,000 & 20  \\
		ImageNet-10 &Train& 13,000 & 10  \\
		ImageNet-Dogs &Train& 19,500 & 15  \\
		Tiny-ImageNet &Train&100,000 & 200 \\ \bottomrule
	\end{tabular}
	\label{tab_ds}
\end{table}

\begin{table*}[!t]
	\centering
	\caption{Comparison of clustering results. The dash denotes that the result is unavailable. The best results are highlighted in \textbf{boldface}.}
	\begin{tabular}{c|ccc|ccc|ccc|ccc|ccc}
		\toprule
		\multirow{2}{*}{Method}  & \multicolumn{3}{c|}{CIFAR-10} & \multicolumn{3}{c|}{CIFAR-100} & \multicolumn{3}{c|}{ImageNet-10} & \multicolumn{3}{c|}{ImageNet-Dogs} & \multicolumn{3}{c}{Tiny-ImageNet} \\
		& NMI & ACC & ARI & NMI & ACC & ARI & NMI & ACC & ARI & NMI & ACC & ARI & NMI & ACC & ARI \\ \midrule
		$k$-means \cite{lloyd1982least} & 0.087 & 0.229 & 0.049 & 0.084 & 0.130 & 0.028 & 0.119 & 0.241 & 0.057 & 0.055 & 0.150 & 0.020 & 0.065 & 0.025 & 0.005 \\
		SC \cite{zelnik2004self} & 0.103 & 0.247 & 0.085 & 0.090 & 0.136 & 0.022 & 0.151 & 0.274 & 0.076 & 0.038 & 0.111 & 0.013 & 0.063 & 0.022 & 0.004 \\
		SE-ISR \cite{wang2021fast} &0.084&0.205&0.040&0.080&0.126&0.020&0.152&0.248&0.065&0.028&0.104&0.010&-&-&-\\
		AC \cite{gowda1978agglomerative} & 0.105 & 0.228 & 0.065 & 0.098 & 0.138 & 0.034 & 0.138 & 0.242 & 0.067 & 0.037 & 0.139 & 0.021 & 0.037 & 0.139 & 0.021 \\
		NMF \cite{cai2009locality} & 0.081 & 0.190 & 0.034 & 0.079 & 0.118 & 0.026 & 0.132 & 0.230 & 0.065 & 0.044 & 0.118 & 0.016 & 0.072 & 0.029 & 0.005 \\
		SpectralNet \cite{shaham2018spectralnet} &0.121&0.250&0.062&0.101&0.148&0.034&0.164&0.263&0.075&0.058&0.129&0.021&0.113&0.034&0.006 \\
		DSCCLR \cite{li2024deep} &0.082&0.237&0.059&-&-&-&-&-&-&-&-&-&-&-&-\\
		GCC \cite{zhong2021graph} &0.764 &0.856& 0.728 &0.472 &0.472& 0.305&0.842& 0.901 &0.822&0.490&0.526&0.362&0.347&0.138&0.075\\
		WEC-GAN \cite{CaiZWFG24} &0.334&0.474&-&-&-&-& 0.855 & 0.922 & -& 0.441 & 0.437 & -& 0.383 & 0.177 & -\\
		JULE \cite{yang2016joint} & 0.192 & 0.272 & 0.138 & 0.103 & 0.137 & 0.033 & 0.175 & 0.300 & 0.138 & 0.054 & 0.138 & 0.028 & 0.102 & 0.033 & 0.006 \\
		DEC \cite{xie2016unsupervised} & 0.257 & 0.301 & 0.161 & 0.136 & 0.185 & 0.050 & 0.282 & 0.381 & 0.203 & 0.122 & 0.195 & 0.079 & 0.115 & 0.037 & 0.007 \\
		DAC \cite{chang2017deep} & 0.396 & 0.522 & 0.306 & 0.185 & 0.238 & 0.088 & 0.394 & 0.527 & 0.302 & 0.219 & 0.275 & 0.111 & 0.190 & 0.066 & 0.017 \\
		DCCM \cite{wu2019deep} & 0.496 & 0.623 & 0.408 & 0.285 & 0.327 & 0.173 & 0.608 & 0.710 & 0.555 & 0.321 & 0.383 & 0.182 & 0.224 & 0.108 & 0.038 \\
		PICA \cite{huang2020deep} & 0.591 & 0.696 & 0.512 & 0.310 & 0.337 & 0.171 & 0.802 & 0.870 & 0.761 & 0.352 & 0.352 & 0.201 & 0.277 & 0.098 & 0.040 \\
		HCSC \cite{guo2022hcsc} & 0.407 & 0.480 & 0.295 & 0.361 & 0.362 & 0.206 & 0.647 & 0.741 & 0.559 & 0.355 & 0.355 & 0.209 & 0.305 & 0.139 & 0.006 \\
		IDFD \cite{tao2020clustering} & 0.711 & 0.815 & 0.663 & 0.426 & 0.425 & 0.264 & 0.898 & 0.954 & 0.901 & \multicolumn{1}{c}{0.546} & \multicolumn{1}{c}{0.591} & \multicolumn{1}{c|}{0.413} & - & - & - \\
		SCAN \cite{van2020scan} & 0.787 & 0.876 & 0.758 & 0.468 & 0.459 & 0.301 & - & - & - & - & - & - & - & - & - \\
		CC \cite{li2021contrastive} & 0.705 & 0.790 & 0.637 & 0.431 & 0.429 & 0.266 & 0.859 & 0.893 & 0.822 & 0.445 & 0.429 & 0.274 & 0.340 & 0.140 & 0.071 \\
		DeepCluE \cite{huang2022deepclue} & 0.727 & 0.764 & 0.646 & 0.472 & 0.457 & 0.288 & 0.882 & 0.924 & 0.856 & \multicolumn{1}{c}{0.448} & \multicolumn{1}{c}{0.416} & \multicolumn{1}{c|}{0.273} & 0.379 & 0.194 & 0.102 \\
		SSCN \cite{wang2024semantic} &0.751&0.845&0.707&0.489&0.488&0.328& - & - & -& - & - & -& - & - & -\\
		DivClust \cite{metaxas2023divclust} & 0.710 & 0.815 & 0.675 & 0.440 & 0.437 & 0.283 & 0.850 & 0.900 & 0.819 & 0.516 & 0.529 & 0.376 &-  & - &-  \\ 
		\methodname (Ours) &\textbf{0.814}  & \textbf{0.883} & \textbf{0.780} & \textbf{0.528} & \textbf{0.492} & \textbf{0.347} & \textbf{0.913} & \textbf{0.962} & \textbf{0.918} & \textbf{0.596} & \textbf{0.643} & \textbf{0.473} & \textbf{0.398} & \textbf{0.199} & \textbf{0.111} \\ \bottomrule
	\end{tabular}
	\label{tab_sota}
\end{table*}

\begin{figure*}[thbp]
	\centering	
	\newcommand{\AffinityGraphWithd}{0.32}
	\subfigure{\includegraphics[width=\AffinityGraphWithd\linewidth]{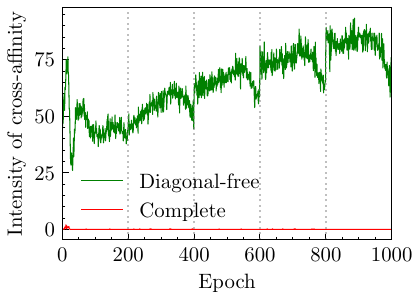}}
	\subfigure{\includegraphics[width=\AffinityGraphWithd\linewidth]{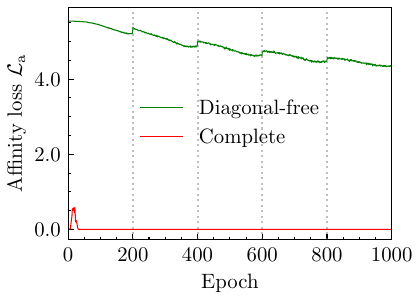}}
	\subfigure{\includegraphics[width=\AffinityGraphWithd\linewidth]{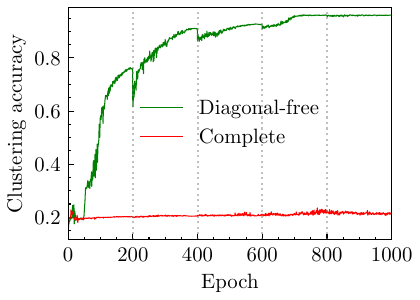}}
	\caption{Avoiding trivial solution. The gray dotted lines indicate learning rate restarts. The original \textit{complete} objective of spectral clustering results in degenerate clustering, while our \textit{diagonal-free} modification produces meaningful clustering.}
	\label{fig_trivialsolution}
\end{figure*}

\noindent\textbf{Implementation Details.}
We use the original $32\times32$ pixel images as input for the CIFAR-10 and CIFAR-100 datasets.
For the ImageNet-based datasets with more complex patterns, we resize the images to $64\times64$ to preserve more visual details, except for the Tiny-ImageNet dataset (still $32\times32$) due to the GPU memory limitation.
For fair comparisons, we employ the same data augmentation scheme as described in prior work \cite{metaxas2023divclust}. 
The batch size $B$ is set to $256$ for each dataset, except for the Tiny-ImageNet dataset ($B=1024$) due to its increased number of clusters.

For fair comparisons with prior works \cite{metaxas2023divclust,li2021contrastive,huang2022deepclue}, we employ the same ResNet-34 \cite{he2016deep} backbone (without the final classification layer) followed by a multilayer perceptron (MLP) projector. 
The MLP projector consists of three fully connected layers with dimensions $[512, 4096, 128]$, which projects the backbone outputs to $128$-dimensional embeddings.
Following the MLP projector is the clustering head, which is a linear layer with an output dimension of $K$.
We set $K$ to the number of ground-truth image categories of each dataset for evaluation.
We train all network parameters from scratch and initialize these parameters using the Xavier approach \cite{glorot2010understanding}.

To ensure fairness and avoid using ground-truth labels for hyperparameter tuning, we fix the following hyperparameters.
The trade-off parameter $\eta$ in both Equations \eqref{wq_wplus} and \eqref{eq_pplus} is set to $0.05$.
The number of iterations $I_{SK}$ in Sinkhorn's fixed point iteration is set to $5$.
The trade-off parameter $\lambda$ in Equation \eqref{eq_loss} is set to $1$.
The logarithmic temperature parameter $\log(\tau)$ is initialized to $\log 0.05$ and constrained to a maximum value of $\log 1$ during training.
The effects of the above hyperparameters on clustering performance are discussed later.

We employ the SGD optimizer with a learning rate of $0.04\times{B}/{256}$, a weight decay of $0.0005$, and a momentum of $0.9$. 
The learning rate is decayed using the cosine decay schedule with restarts every $200$ epochs \cite{loshchilov2016sgdr}. 
We train the proposed \methodname model for $1000$ epochs, which is identical with prior works \cite{metaxas2023divclust,li2021contrastive,huang2022deepclue}.
Our code is based on the PyTorch \cite{paszke2019pytorch} toolbox and publicly available.
We conduct all experiments on a server equipped with one Nvidia GeForce RTX 4090 GPU (GPU memory = $24$GB).
The running time of training the \methodname model is roughly $9$ gpu-hours on the CIFAR-10 dataset; $9$ gpu-hours on CIFAR-100; $7$ gpu-hours on ImageNet-10; $10$ gpu-hours on ImageNet-dogs; and $14$ gpu-hours on Tiny-ImageNet.

\noindent\textbf{Evaluation Metrics.}
We  quantify clustering performance using three standard metrics: (1) Normalized Mutual Information ({NMI})  \cite{vinh2009information}, clustering ACCuracy ({ACC}), and Adjusted Random Index ({ARI}) \cite{hubert1985comparing}.
The three metrics range from zero to one, and higher values indicate better clustering performance. 

\subsection{Comparisons with State-of-the-Art Methods}
We compare the proposed \methodname with 5 traditional clustering methods and 16 deep clustering methods. 
The traditional clustering methods include: $k$-means \cite{lloyd1982least},
SC \cite{zelnik2004self},  SE-ISR \cite{wang2021fast}, AC \cite{gowda1978agglomerative}, and NMF \cite{cai2009locality}.
The deep clustering methods include:
SpectralNet \cite{shaham2018spectralnet}, DSCCLR \cite{li2024deep}, GCC \cite{zhong2021graph}, WEC-GAN \cite{CaiZWFG24},
JULE \cite{yang2016joint}, DEC \cite{xie2016unsupervised}, DAC \cite{chang2017deep}, DCCM \cite{wu2019deep}, PICA \cite{huang2020deep}, HCSC \cite{guo2022hcsc}, IDFD \cite{tao2020clustering}, SCAN \cite{van2020scan}, CC \cite{li2021contrastive}, DeepCluE \cite{huang2022deepclue}, SSCN \cite{wang2024semantic}, and DivClust \cite{metaxas2023divclust}.
The clustering results of the competitors are either borrowed directly from their respective papers or obtained by running their released code with their default configurations.

Table \ref{tab_sota} reports the clustering results of each method on the five benchmark image datasets.
\methodname consistently outperforms the competitors across all datasets.
\methodname significantly outperforms all traditional clustering methods, owing to the powerful representation capabilities of deep neural networks.
Moreover, compared with the deep clustering models, \methodname also achieves more promising results. 
For example, \methodname substantially surpasses the runner-up SSCN on the CIFAR-100 dataset in terms of the NMI metric, 0.528 \textit{vs.} 0.489. 
This is because these competitors rely on Euclidean distance, whereas \methodname effectively exploits data similarities to achieve better clustering.
Additionally, on the highly challenging ImageNet-Dogs dataset, \methodname accomplishes a notable 16\% improvement in NMI over the most competitive baseline.
Overall, these strong results demonstrate the effectiveness of BootSC.

\subsection{Ablation Study and Parameter Sensitivity Analysis}
\noindent\textbf{Trivial Solution.}
To avoid the trivial solution to Equation \eqref{eq_wzzt}, we introduce a diagonal-free modification that ignores self-affinities and only focus on cross-affinities.
To evaluate this modification, we compare it with the original complete objective by analyzing how the intensity of cross-affinity---defined as the sum of off-diagonal elements in the solution $\mathbf{W}^+$, the affinity loss $\mathcal{L}_{{a}}$, and clustering accuracy vary during training on the ImageNet-10 dataset.

Figure \ref{fig_trivialsolution} shows that the intensity of cross-affinity of the original complete objective remains near zero throughout the training process.
This indicates that the model overwhelmingly focuses on capturing trivial intra-sample relationships rather than exploring meaningful inter-sample ones.
Consequently, the affinity loss $\mathcal{L}_{{a}}$ rapidly converges to zero, leading to degenerate clustering results.
In contrast,  our diagonal-free modification encourages nonzero cross-affinities, promoting the model to magnify the affinities between similar samples while diminishing those of dissimilar ones.
Thus, $\mathcal{L}_{{a}}$ decreases steadily and the model identifies meaningful clusters.

\begin{figure}[thbp]
	\newcommand{\Withd}{0.49}
	\centering	
	\subfigure[CIFAR-10]{\includegraphics[width=\Withd\columnwidth]{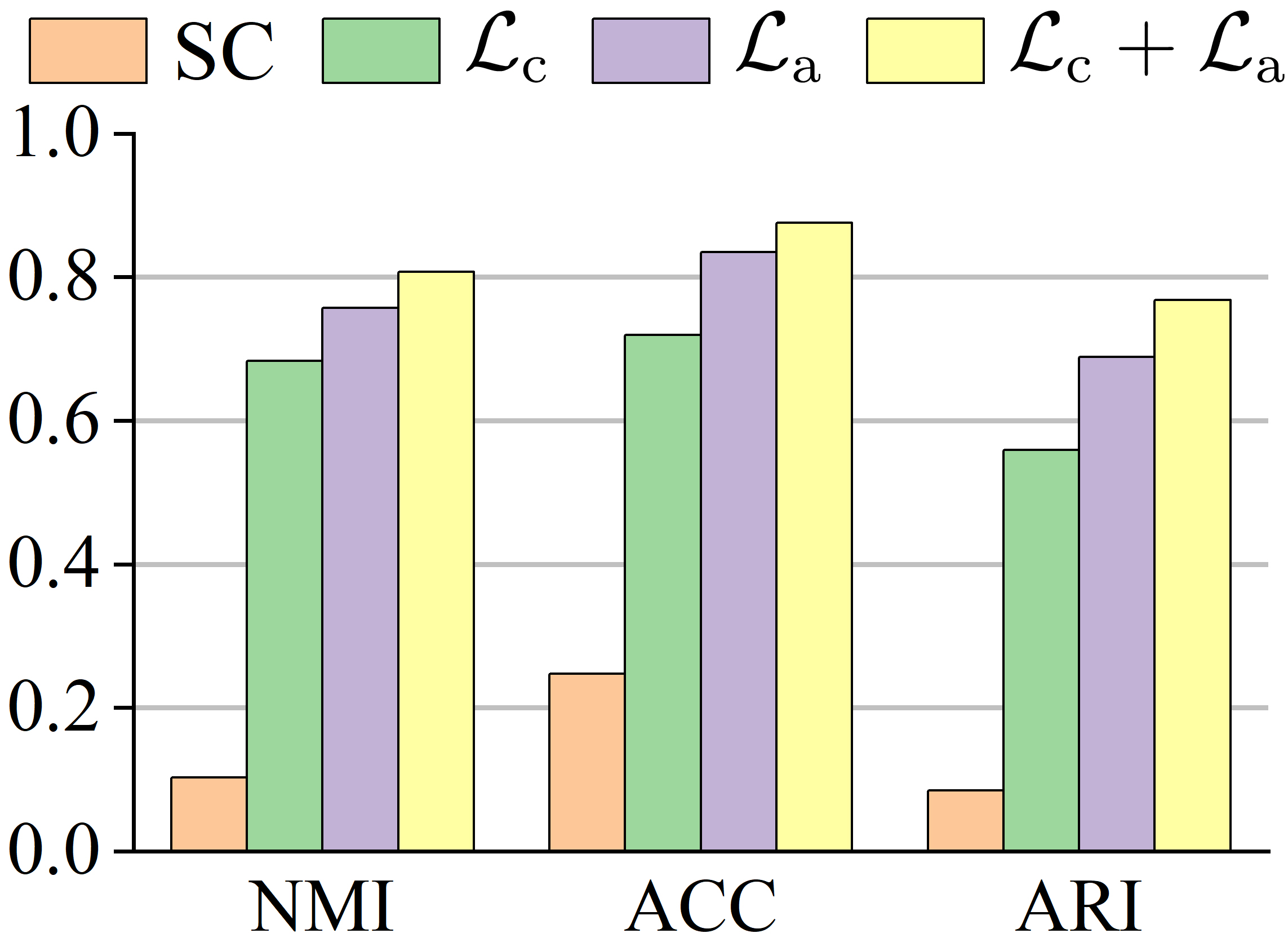}}
	\subfigure[CIFAR-100]{\includegraphics[width=\Withd\columnwidth]{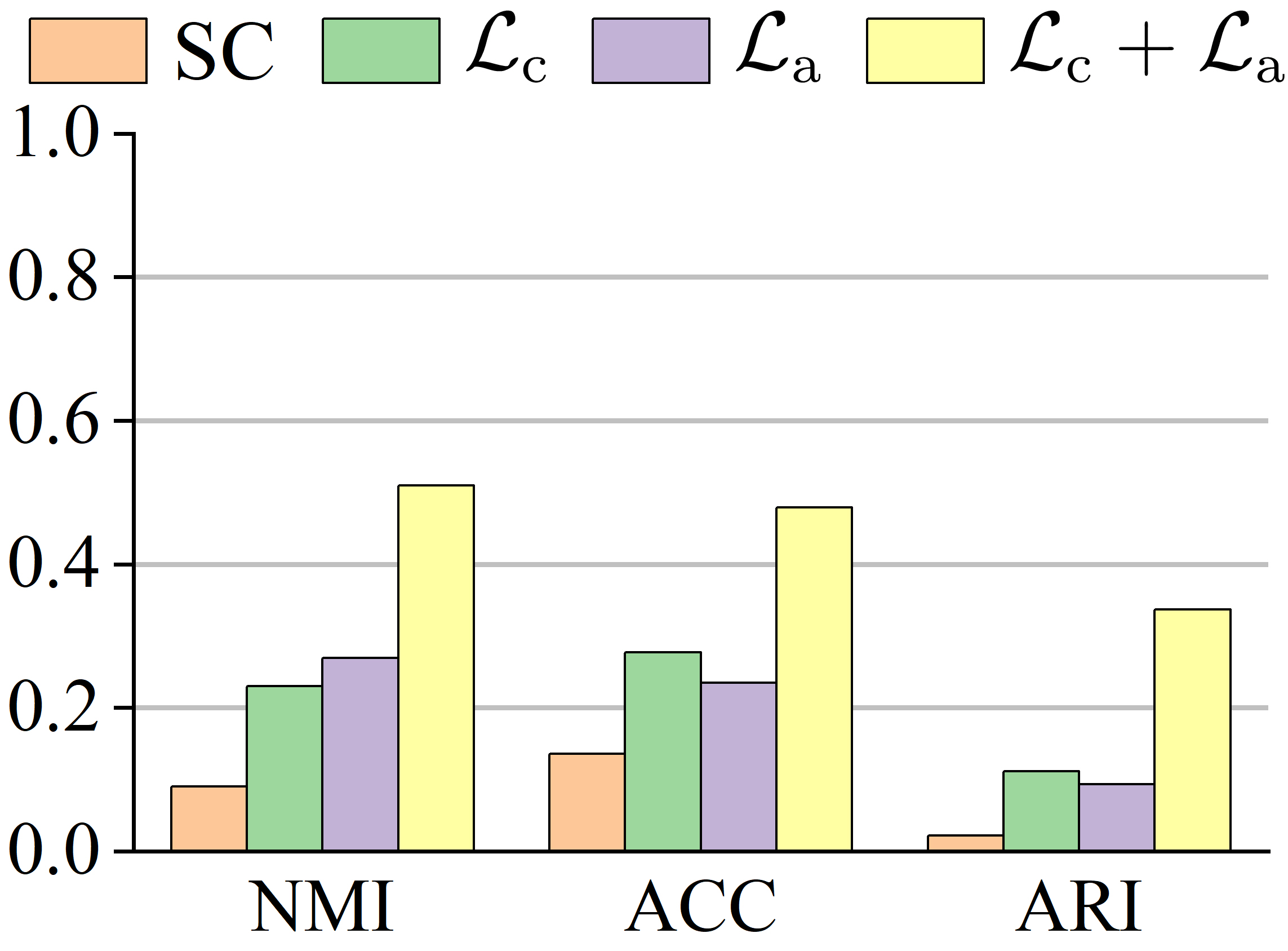}}
	\subfigure[ImageNet-10]{\includegraphics[width=\Withd\columnwidth]{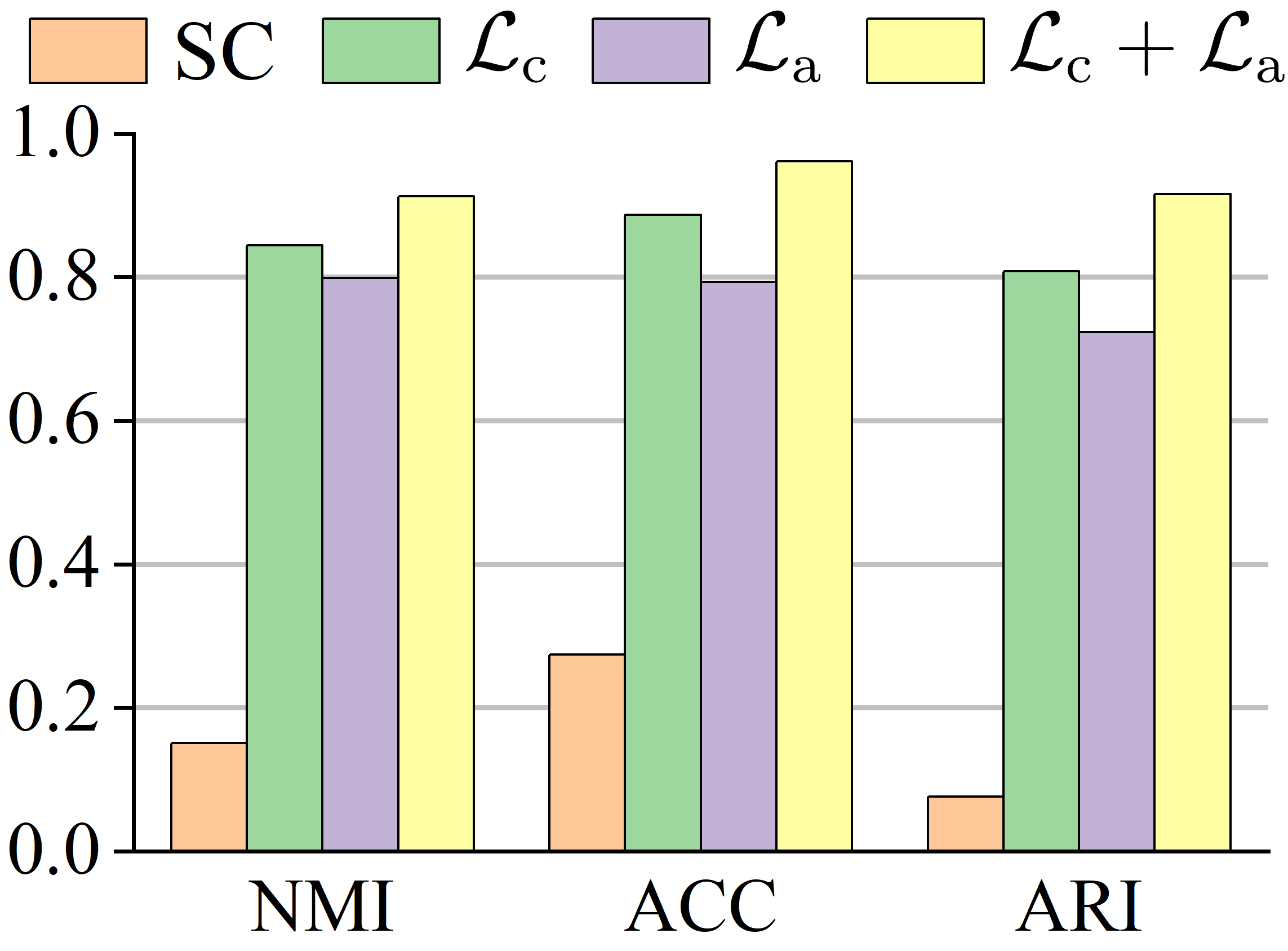}}
	\subfigure[ImageNet-Dogs]{\includegraphics[width=\Withd\columnwidth]{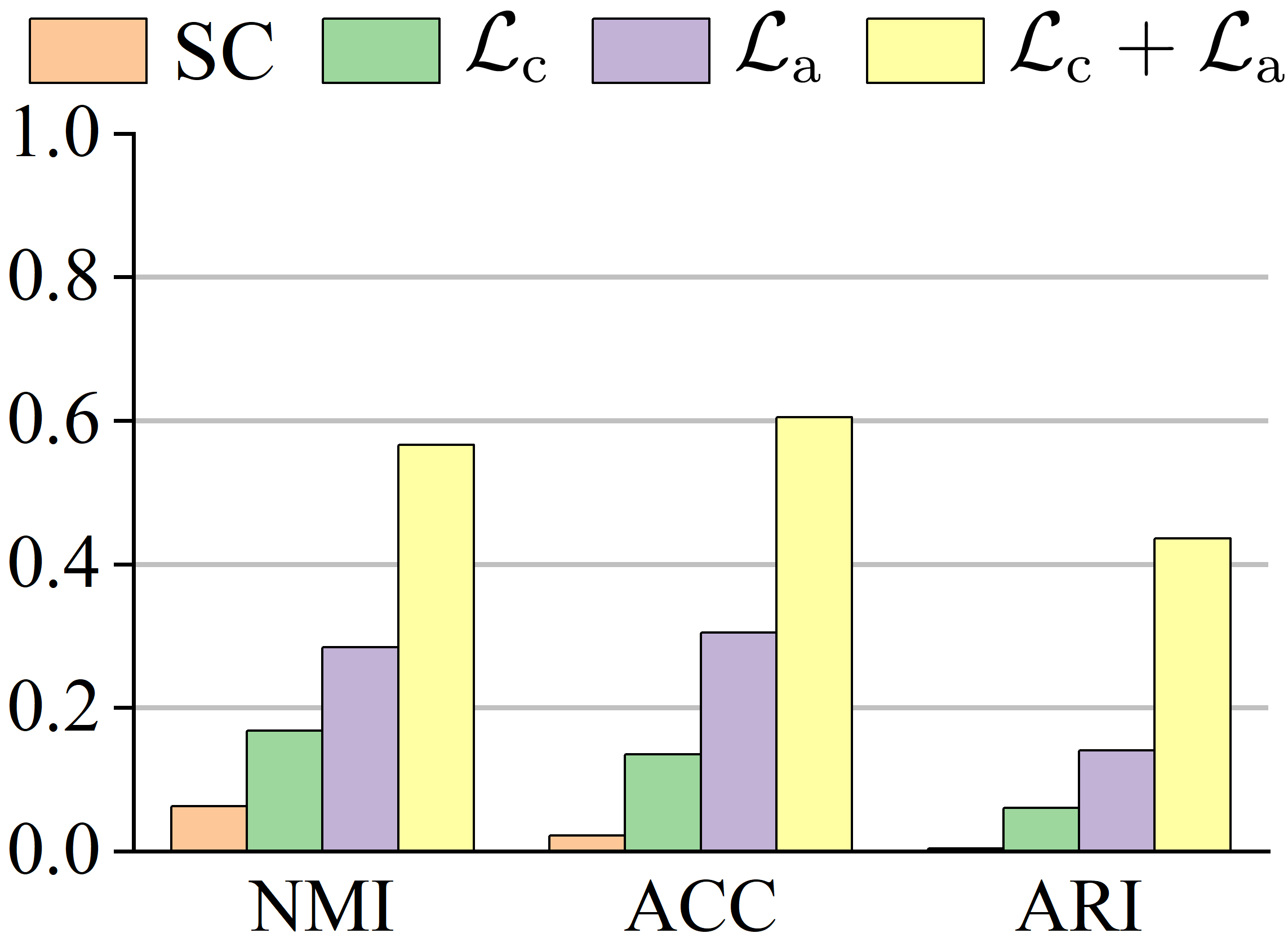}}
	\caption{Ablation study of the loss function.}
	\label{tab_loss}
\end{figure}

\begin{figure}[thbp]
	\newcommand{\Withd}{0.49}
	\centering	
	\subfigure[CIFAR-10]{\includegraphics[width=\Withd\columnwidth]{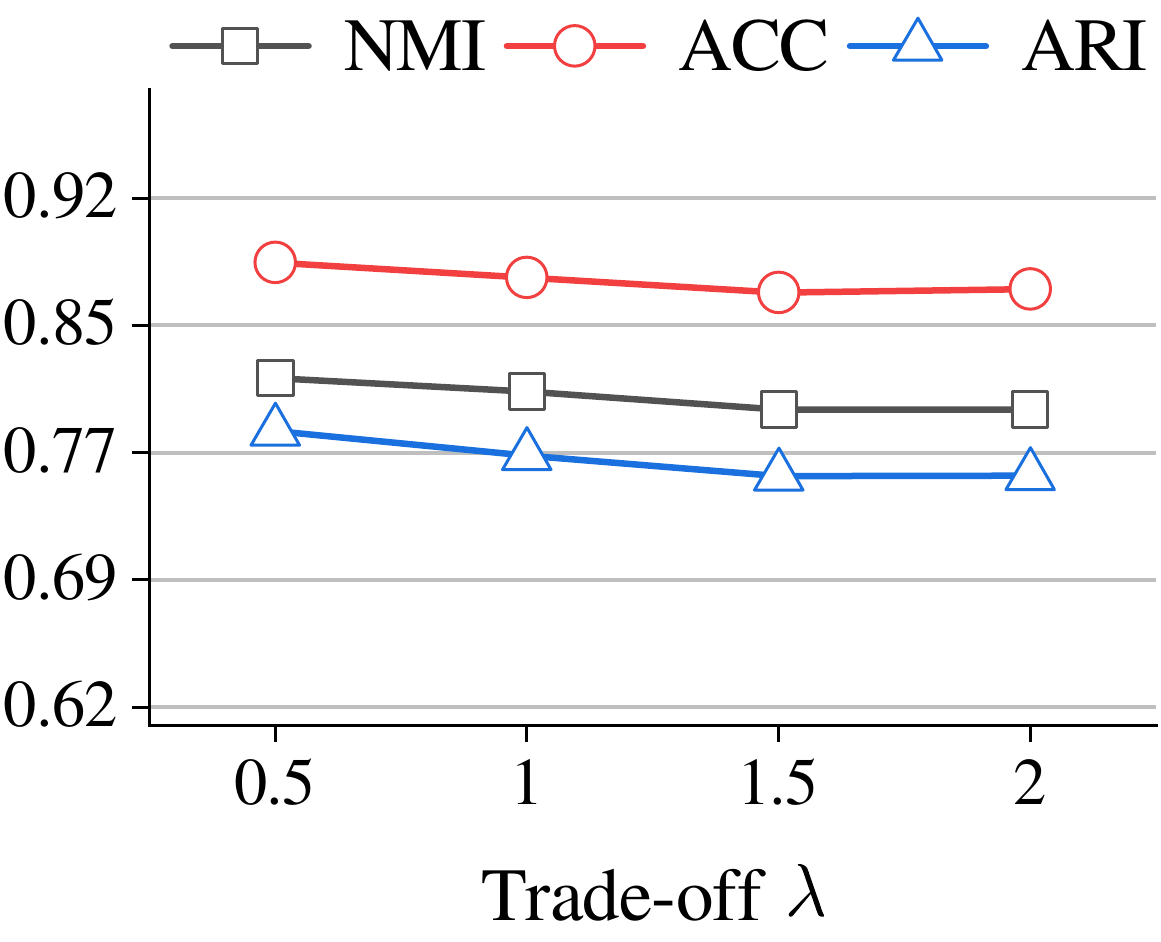}}
	\subfigure[CIFAR-100]{\includegraphics[width=\Withd\columnwidth]{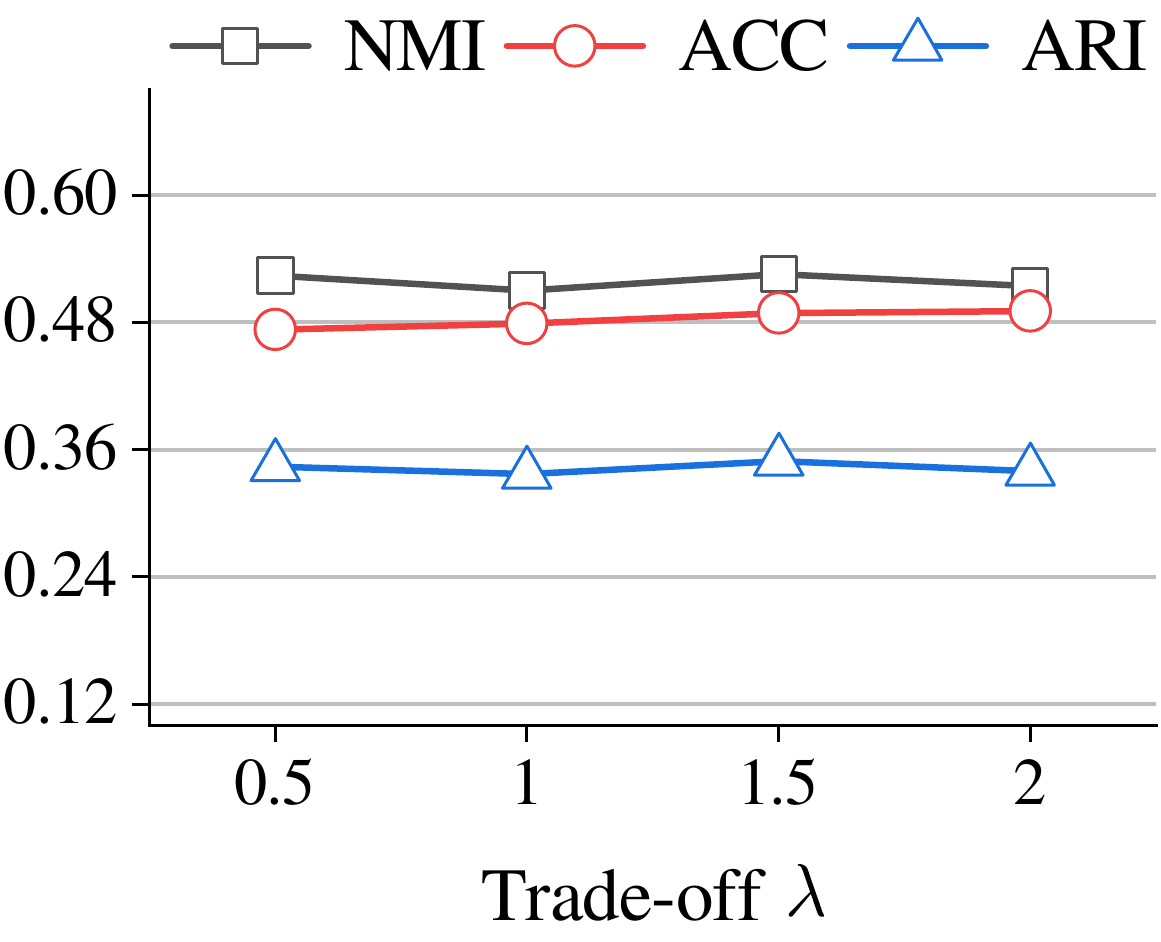}}
	\subfigure[ImageNet-10]{\includegraphics[width=\Withd\columnwidth]{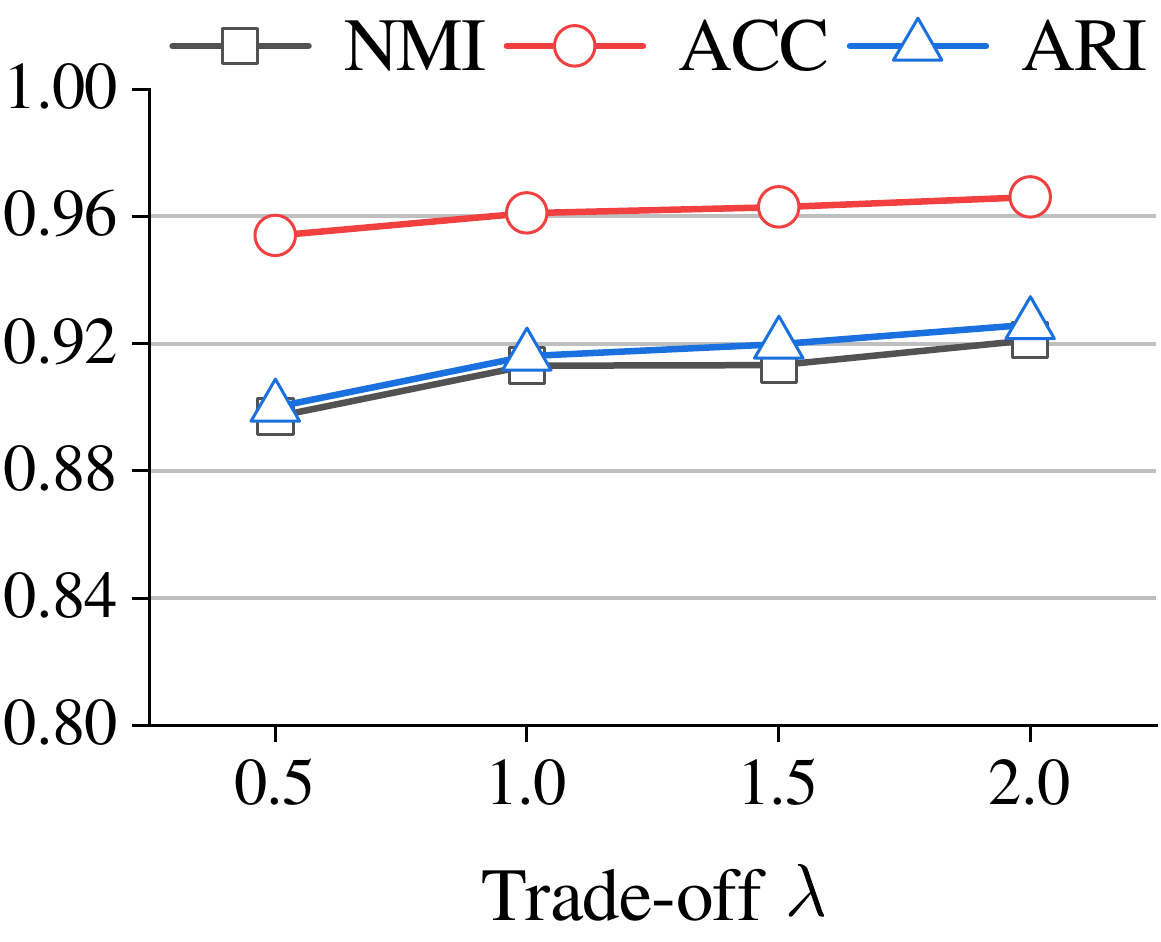}}
	\subfigure[ImageNet-Dogs]{\includegraphics[width=\Withd\columnwidth]{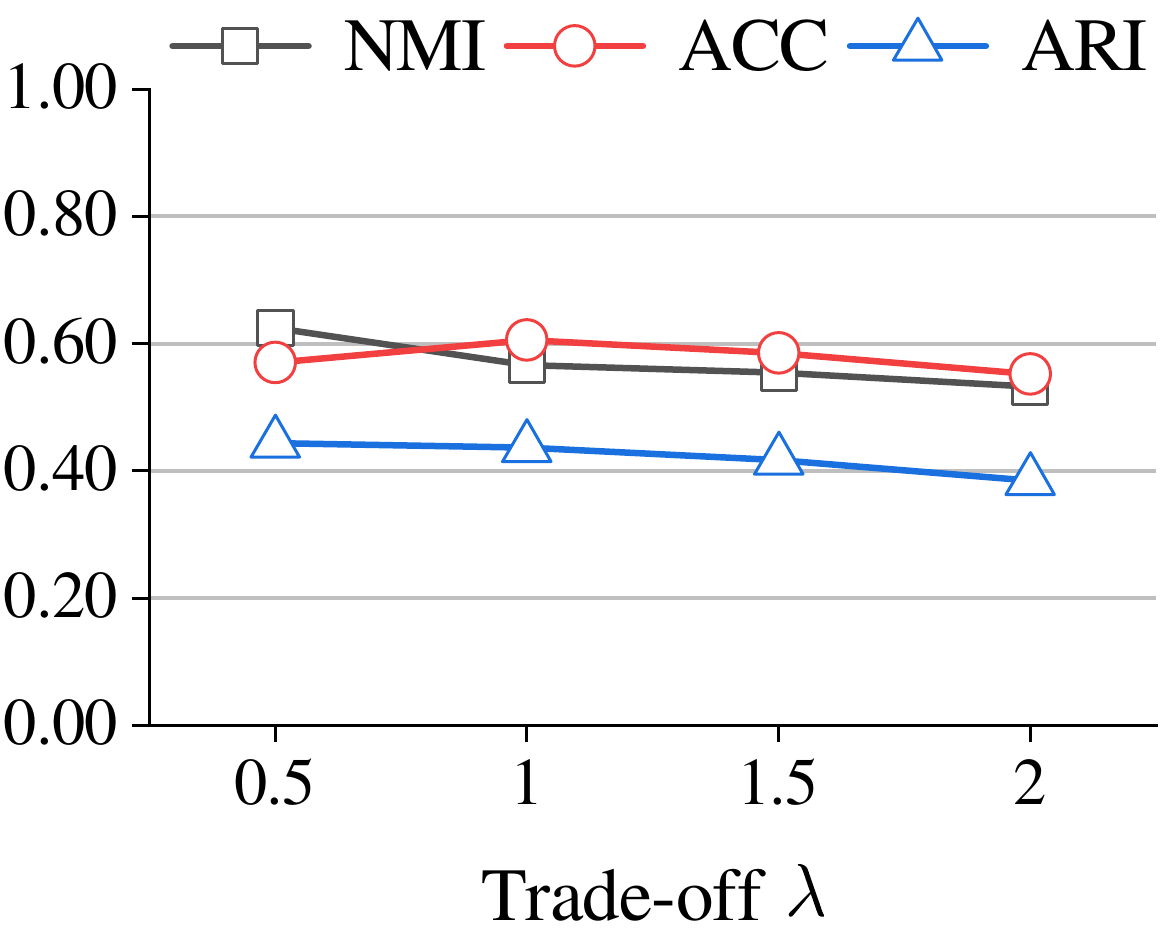}}
	\caption{Effect of the trade-off parameter $\lambda$ in the total loss function $\mathcal{L}_{{a}} + \lambda \mathcal{L}_{{c}}$.}
	\label{tab_lamb}
\end{figure}

\noindent\textbf{Loss Function Analysis.}
\methodname comprises two loss functions:  the affinity loss $\mathcal{L}_{{a}}$ and the clustering loss $\mathcal{L}_{{c}}$.
To investigate the role of each loss function, we first establish a baseline by performing vanilla spectral clustering (SC) \cite{zelnik2004self} on the raw data.
Subsequently,  we train a model by only minimizing $\mathcal{L}_{\text {c}}$ and another model by only minimizing $\mathcal{L}_{\text {a}}$.
Without $\mathcal{L}_{c}$, we cannot directly obtain the cluster assignments and thus conduct posthoc $k$-means clustering on the learned embeddings for computing performance metrics.

\begin{figure}[thbp]
	\newcommand{\Withd}{0.49}
	\centering	
	\subfigure[CIFAR-10]{\includegraphics[width=\Withd\columnwidth]{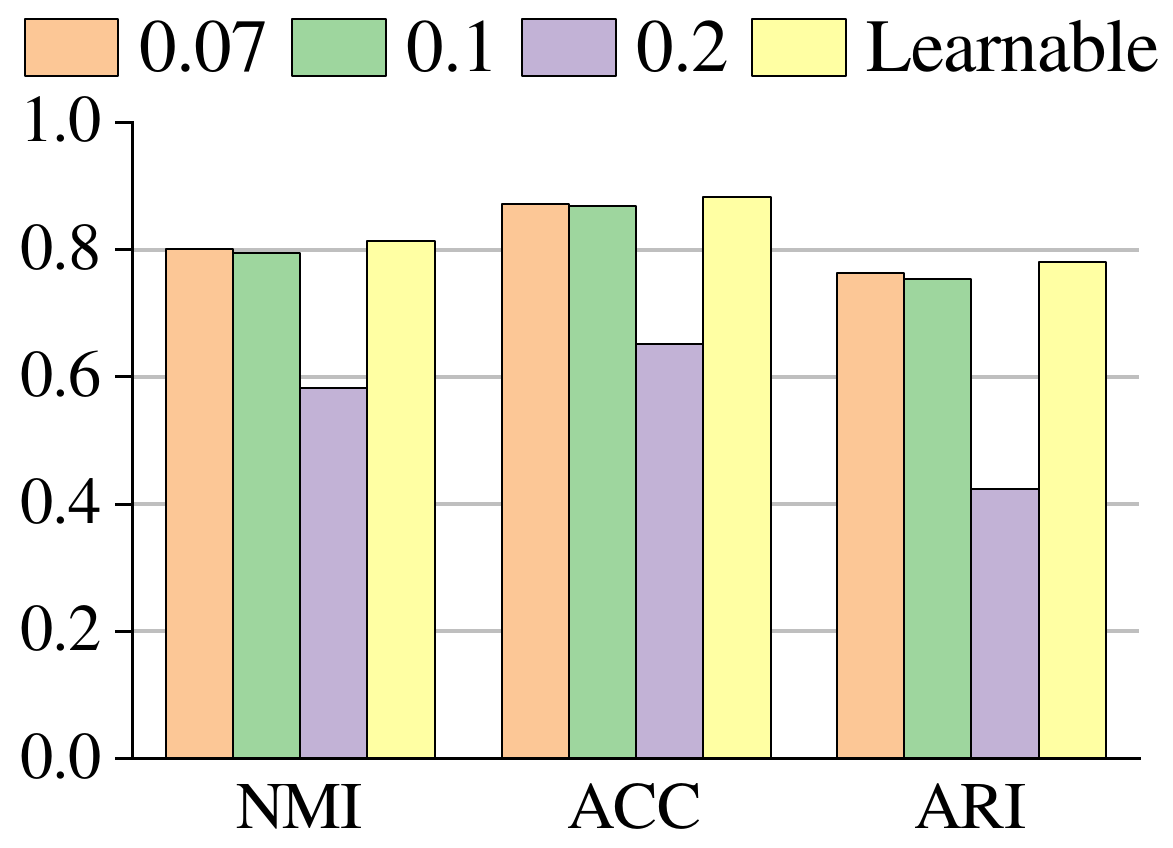}}
	\subfigure[CIFAR-100]{\includegraphics[width=\Withd\columnwidth]{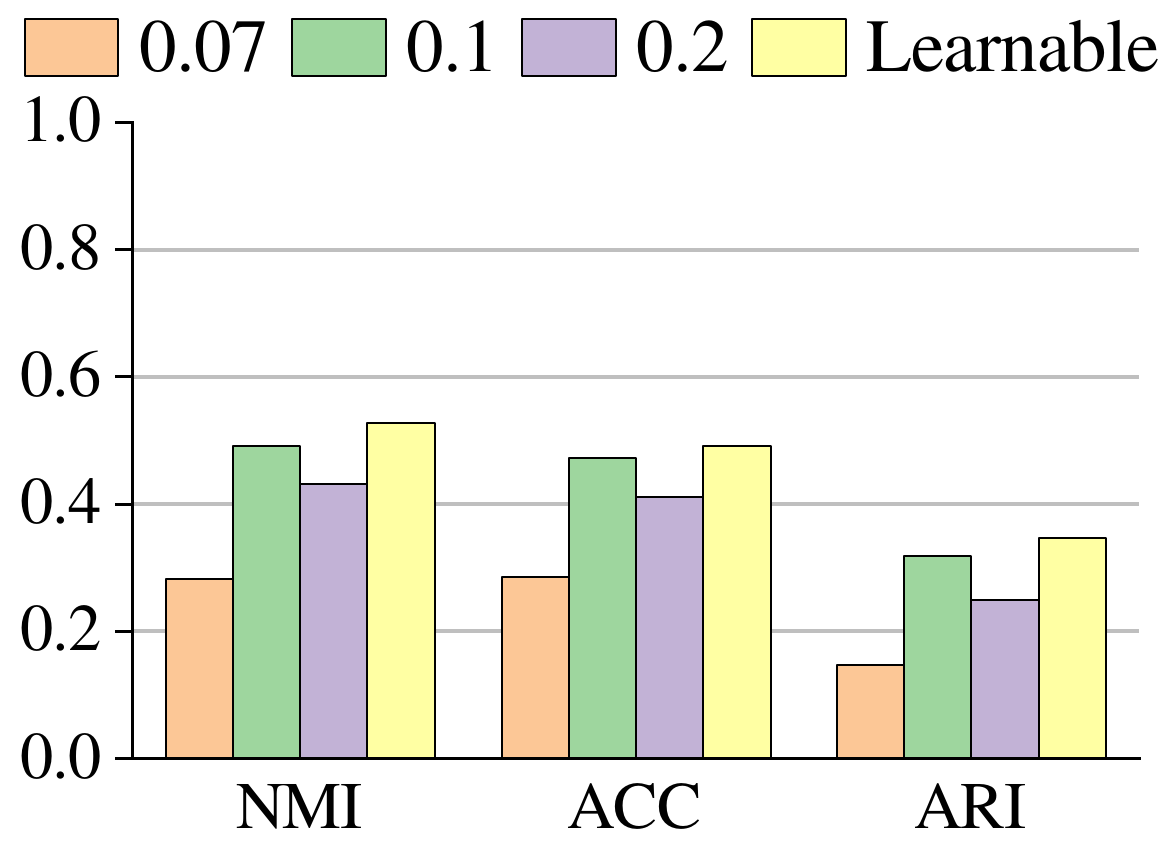}}
	\subfigure[ImageNet-10]{\includegraphics[width=\Withd\columnwidth]{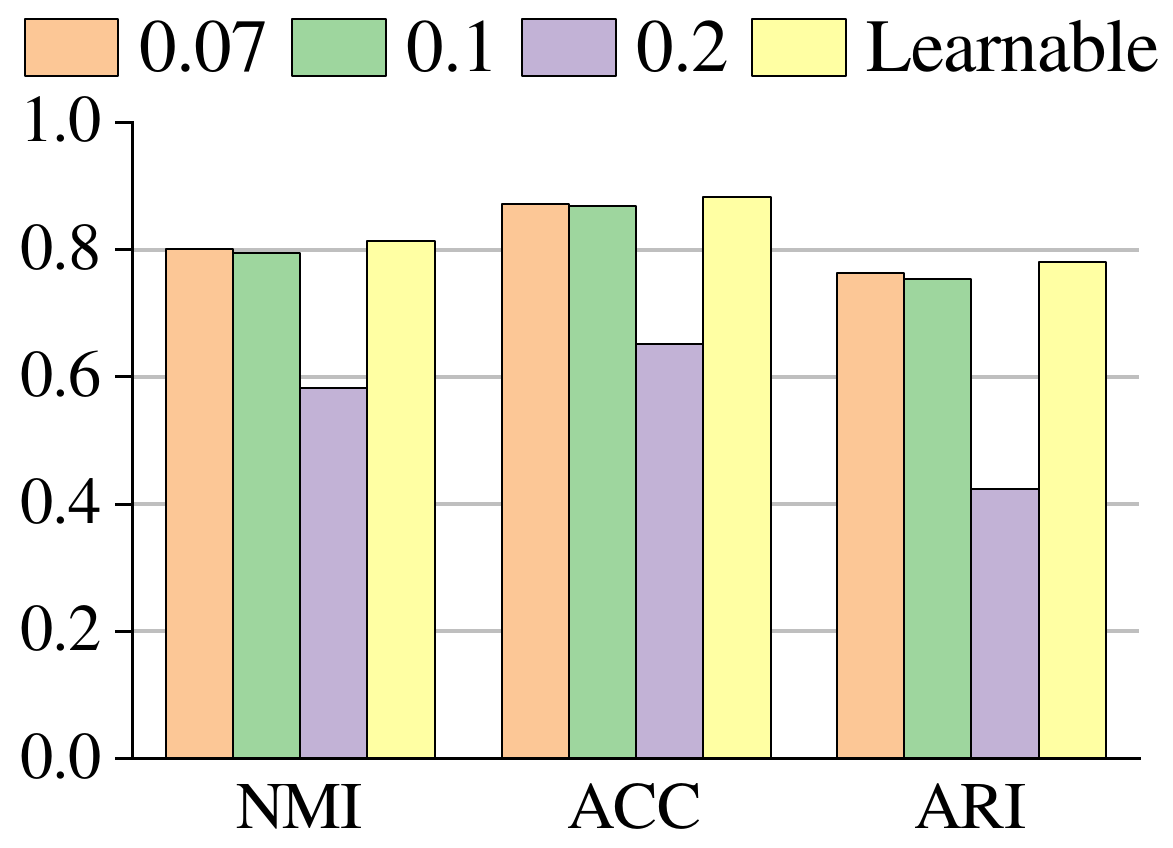}}
	\subfigure[ImageNet-Dogs]{\includegraphics[width=\Withd\columnwidth]{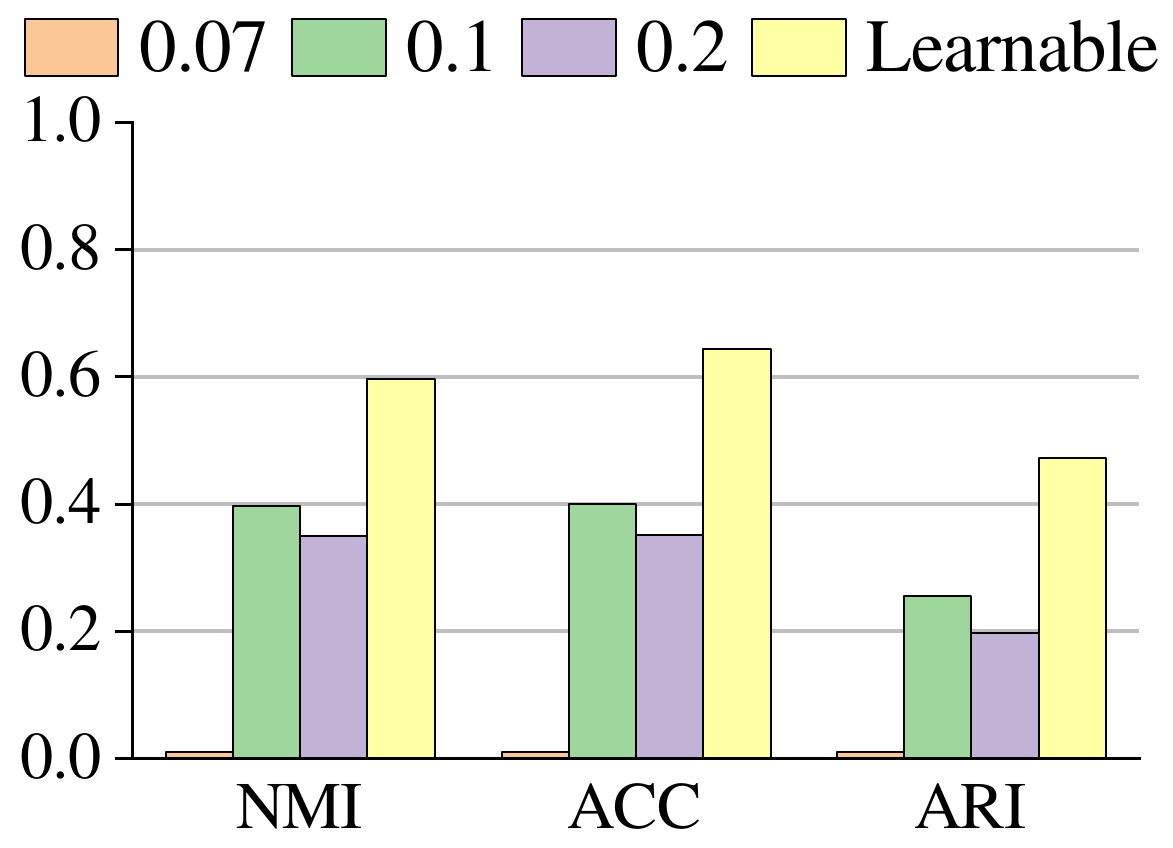}}
	\caption{Fixed \textit{vs.} Learnable temperature parameter $\tau$.}
	\label{tab_tau}
\end{figure}

The results in Figure \ref{tab_loss} underscore the individual contributions of each loss function.
The vanilla spectral clustering yields unsatisfactory results due to its limited representation capability.
Employing $\mathcal{L}_{{c}}$ in isolation yields underperformed results, as it falls under the category of Euclidean distance-based clustering methods that fail to capture the intricate relationships within the data.
While utilizing $\mathcal{L}_{{a}}$ alone enables connectivity-based clustering, it still produces suboptimal results. 
This limitation can be attributed to the absence of joint optimization between embedding learning and clustering. 
Most importantly, the joint optimization of $\mathcal{L}_{{c}}$ and $\mathcal{L}_{{a}}$ achieves the best performance, demonstrating their combination benefit.

We further evaluate the impact of varying the trade-off parameter $\lambda \in [0.5, 1, 1.5, 2]$ in the total loss function $ \mathcal{L}_{{a}} + \lambda \mathcal{L}_{{c}}$, and the results are shown in Figure \ref{tab_lamb}.
The CIFAR-10 dataset slightly favors a lower $\lambda$, while the ImageNet-10 dataset marginally benefits from a larger $\lambda$.
Such variations in preference are expected in end-to-end deep spectral clustering models, as they must balance the two different subtasks (spectral embedding learning and $k$-means clustering).
Notably, all performance metrics exhibit minimal fluctuation across different datasets when $\lambda$ is close to one.
We recommend fixing $\lambda=1$ for different datasets as cross-validation is infeasible for real clustering tasks, which consistently achieves near-optimal performance.

\noindent\textbf{Learnable Temperature Parameter.}
The temperature parameter $\tau$ is a key parameter in spectral clustering.
Finding the optimal value for $\tau$ can be challenging as it varies across different datasets.
We learn $\tau$ during training rather than relying on manual hyperparameter tuning.
To validate this strategy, we compare the learnable $\tau$ against a set of carefully chosen fixed values $[0.07,0.1,0.2]$.
As shown in Figure \ref{tab_tau}, the optimal value of $\tau$ varies across datasets.
For example, setting $\tau=0.07$ yields favorable results on the ImageNet-10 dataset, while resulting in degenerate clustering on the ImageNet-Dogs dataset.
In contrast, the learnable $\tau$ consistently outperforms these fixed settings across all datasets, highlighting the advantages of the learnable strategy.

\begin{figure*}[thbp]
	\centering	
	\includegraphics[width=\linewidth]{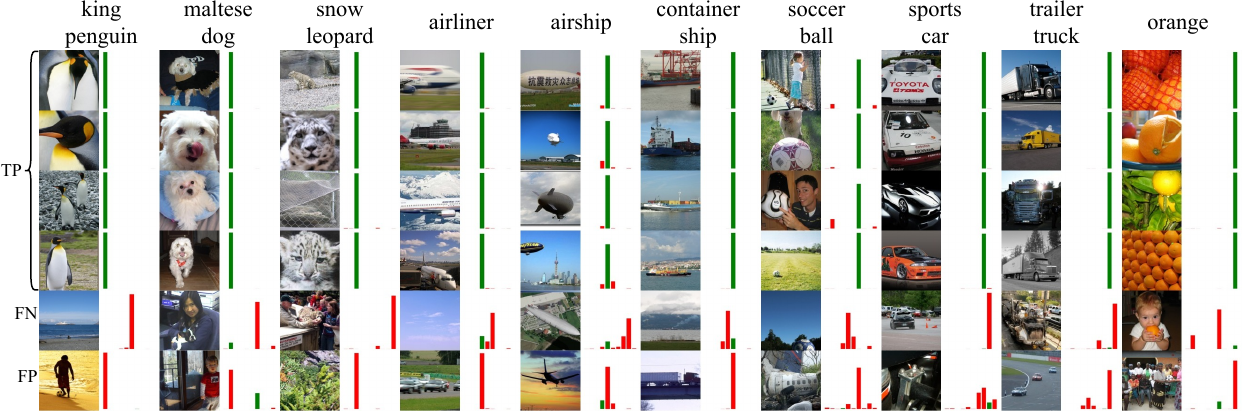}
	\caption{Probabilistic cluster assignments predicted by BootSC on the ImageNet-10 dataset. The top four rows are true positives (\textit{TP}), while the bottom two rows are false negatives (\textit{FN}) and false positives (\textit{FP}), respectively. Accompanying each image is a histogram on the right that indicates the probabilistic cluster assignments, with green bars representing ground-truth categories and red bars representing incorrect ones. All images are selected randomly.}
	\label{img_casestudy}
\end{figure*}

\begin{figure*}[t]
	\centering	
	\newcommand{\AffinityGraphWithd}{0.32}
	\subfigure{\includegraphics[width=\AffinityGraphWithd\linewidth]{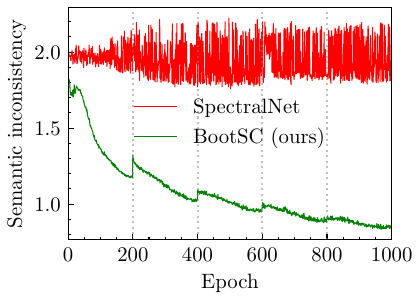}}
	\subfigure{\includegraphics[width=\AffinityGraphWithd\linewidth]{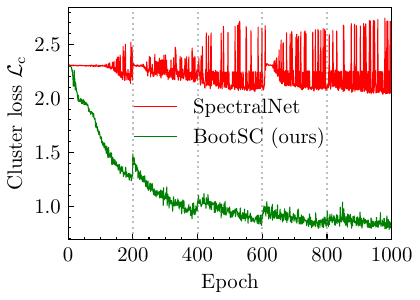}}
	\subfigure{\includegraphics[width=\AffinityGraphWithd\linewidth]{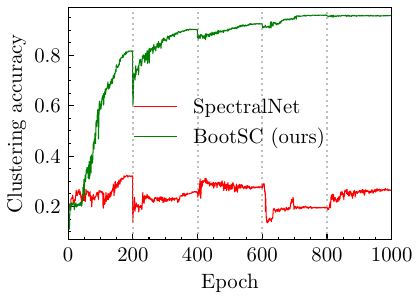}}
	\caption{Existing orthogonalization method \cite{shaham2018spectralnet} encounters training failure while our semantically-consistent orthogonalization addresses this issue. The gray dotted lines indicate where the learning rate restarts.}
	\label{fig_inconsistency}
\end{figure*} 

\begin{table*}[thbp]
	\centering
	\caption{Orthogonalizing spectral embeddings using different methods.}
	\begin{tabular}{c|ccc|ccc|ccc|ccc}
		\toprule
		\multirow{2}{*}{Orthogonalization method}  & \multicolumn{3}{c|}{CIFAR-10} & \multicolumn{3}{c|}{CIFAR-100} & \multicolumn{3}{c|}{ImageNet-10} & \multicolumn{3}{c}{ImageNet-Dogs} \\
		& NMI & ACC & ARI & NMI & ACC & ARI & NMI & ACC & ARI & NMI & ACC & ARI \\ \midrule
		\ding{55} & 0.796 &  0.867& 0.749 &0.524&0.490&0.339& 0.827 & 0.866 & 0.763 &0.514&0.499&0.350\\
		IDFO \cite{tao2020clustering} ($\rho =0.5$) & 0.743& 0.778 & 0.656 &0.513&0.468&0.338& 0.833 &0.874  &0.782  &0.439&0.433&0.282\\
		IDFO \cite{tao2020clustering} ($\rho =1$) & 0.760 &0.787  & 0.678 &0.514&0.499&0.337& 0.831 & 0.871 &0.778  &0.452&0.428&0.277\\
		IDFO \cite{tao2020clustering} ($\rho =2$) & 0.761 & 0.791 & 0.682 &0.515&0.489&0.343& 0.833 &0.870  & 0.777&0.311&0.471&0.311 \\
		QR-decomposition (SpectralNet \cite{shaham2018spectralnet}) & 0.243 &  0.280 & 0.160 &0.160&0.142&0.036&0.282  &0.265  &0.142&0.102&0.134& 0.029 \\
		Orthogonal Procrustes (Ours) &\textbf{0.814}  & \textbf{0.883} & \textbf{0.780} & \textbf{0.528} & \textbf{0.492} & \textbf{0.347} & \textbf{0.913} & \textbf{0.962} & \textbf{0.918} & \textbf{0.596} & \textbf{0.643} & \textbf{0.473} \\ \bottomrule
	\end{tabular}
	\label{tab_orth}
\end{table*}

\begin{table*}[thbp]
	\centering
	\caption{Solving the optimal affinity matrix $\mathbf{W}^+$ using different methods.}
	\begin{tabular}{c|ccc|ccc|ccc|ccc}
		\toprule
		\multirow{2}{*}{Method}  & \multicolumn{3}{c|}{CIFAR-10} & \multicolumn{3}{c|}{CIFAR-100} & \multicolumn{3}{c|}{ImageNet-10} & \multicolumn{3}{c}{ImageNet-Dogs} \\
		& NMI & ACC & ARI & NMI & ACC & ARI & NMI & ACC & ARI & NMI & ACC & ARI \\ \midrule
		Network simplex \cite{bonneel2011displacement} & 0.741  & 0.821 & 0.683 &0.519&0.485&0.342&0.851  & 0.894 &0.820 &0.545&0.578&0.412 \\
		Sinkhorn's iteration \cite{cuturi2013sinkhorn} &\textbf{0.814}  & \textbf{0.883} & \textbf{0.780} & \textbf{0.528} & \textbf{0.492} & \textbf{0.347} & \textbf{0.913} & \textbf{0.962} & \textbf{0.918} & \textbf{0.596} & \textbf{0.643} & \textbf{0.473} \\ \bottomrule  
	\end{tabular}
	\label{tab_eta}
\end{table*}

\begin{figure}[thbp]
	\centering	
	\subfigure{\includegraphics[width=0.45\textwidth]{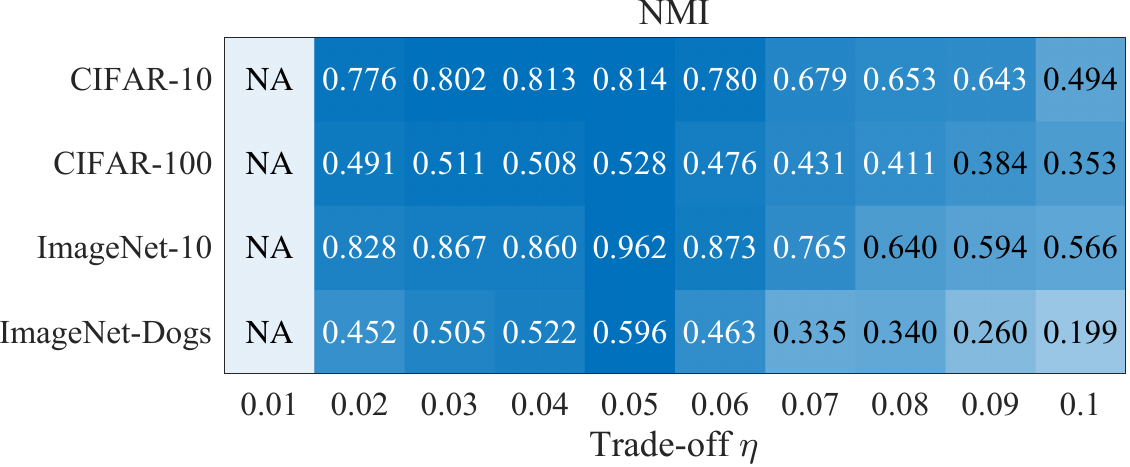}}
	\subfigure{\includegraphics[width=0.45\textwidth]{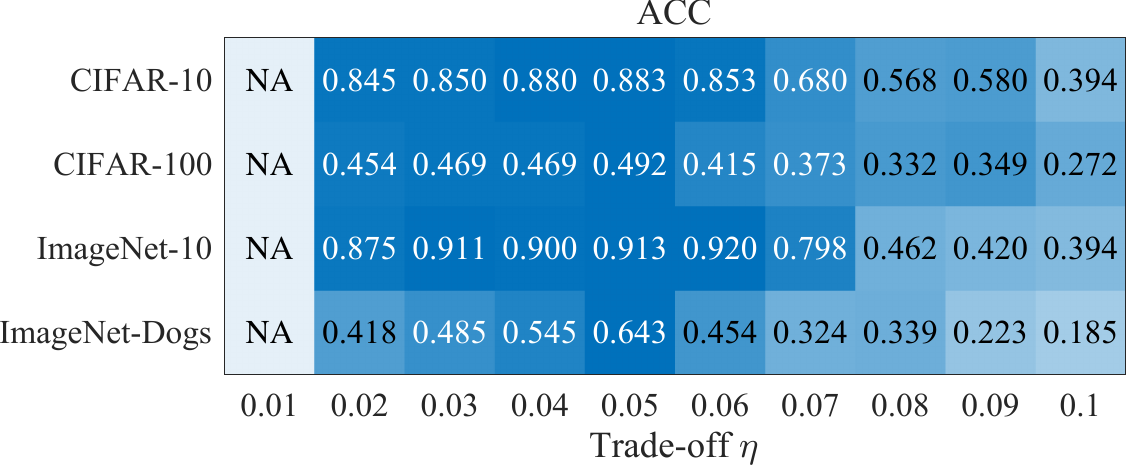}}
	\subfigure{\includegraphics[width=0.45\textwidth]{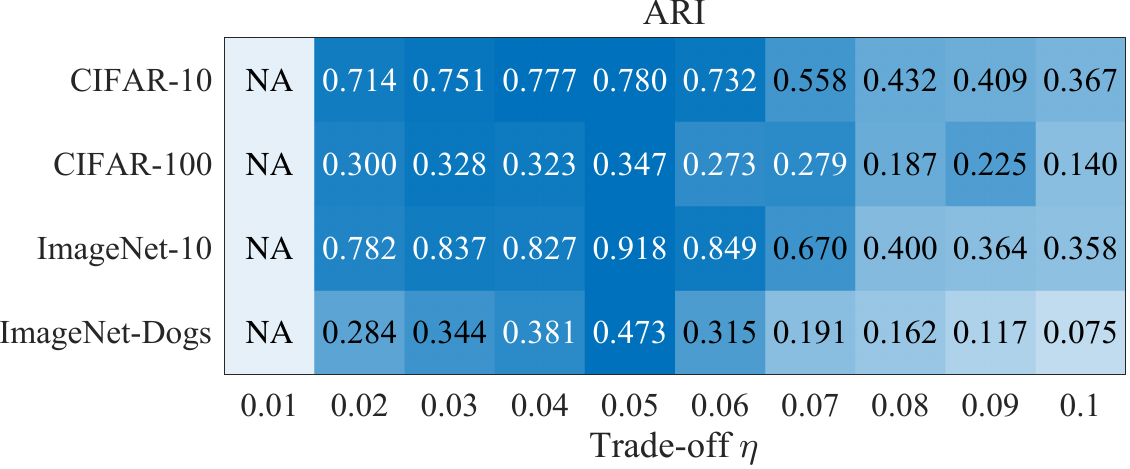}}
	\caption{Effect of the trade-off parameter $\eta$ in Sinkhorn's fixed point iteration. \textit{NA}: the result is not available due to training failure.}
	\label{fig_eta}
\end{figure}

\begin{figure}[thbp]
	\newcommand{\Withd}{0.49}
	\centering	
	\subfigure[CIFAR-10]{\includegraphics[width=\Withd\columnwidth]{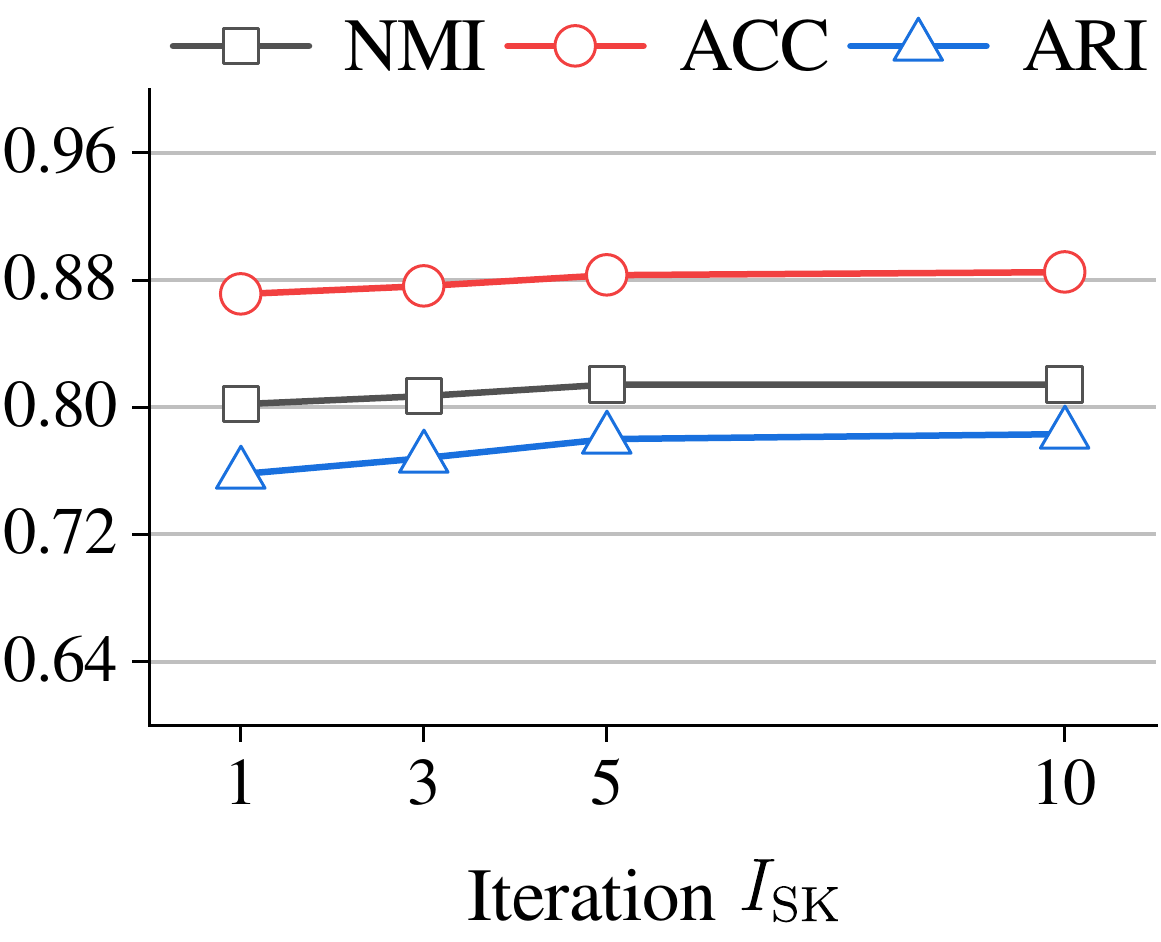}}
	\subfigure[CIFAR-100]{\includegraphics[width=\Withd\columnwidth]{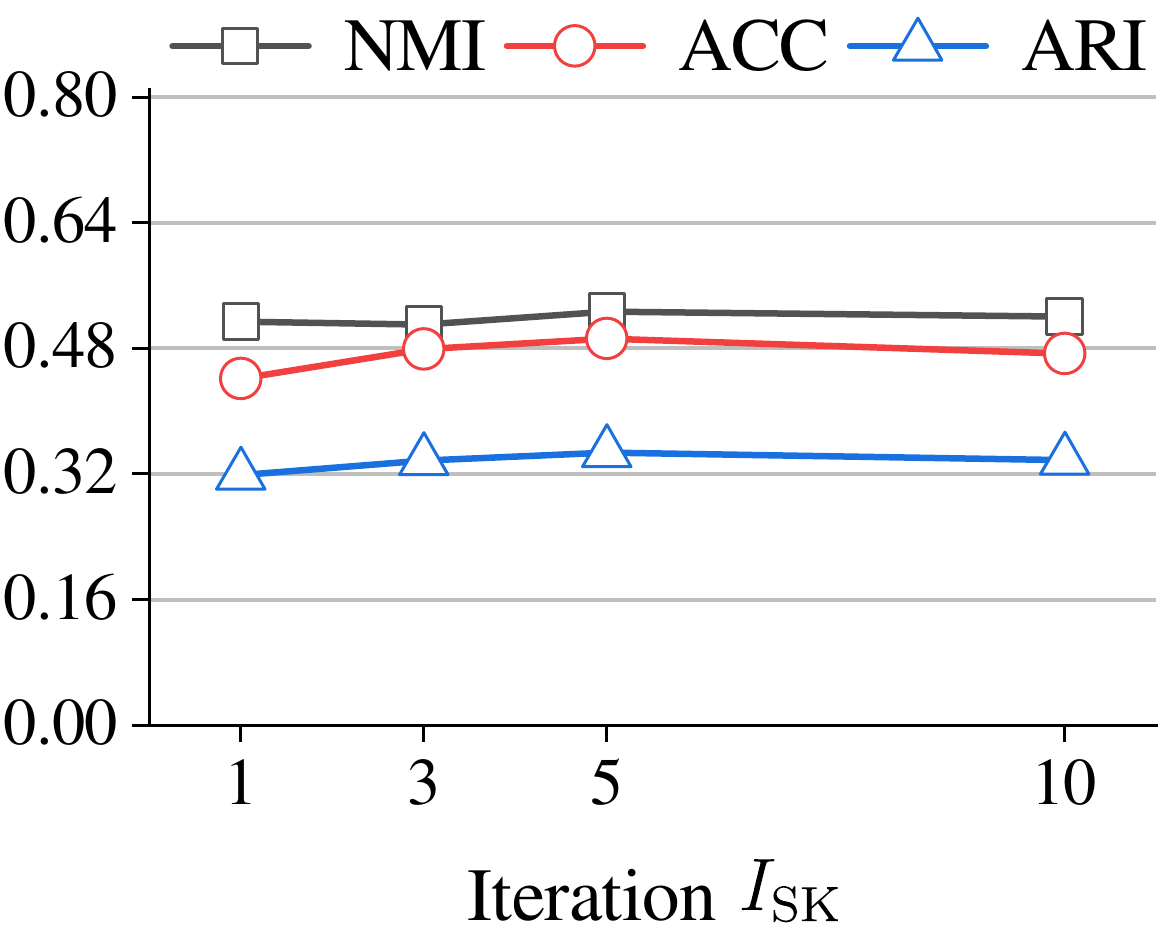}}
	\subfigure[ImageNet-10]{\includegraphics[width=\Withd\columnwidth]{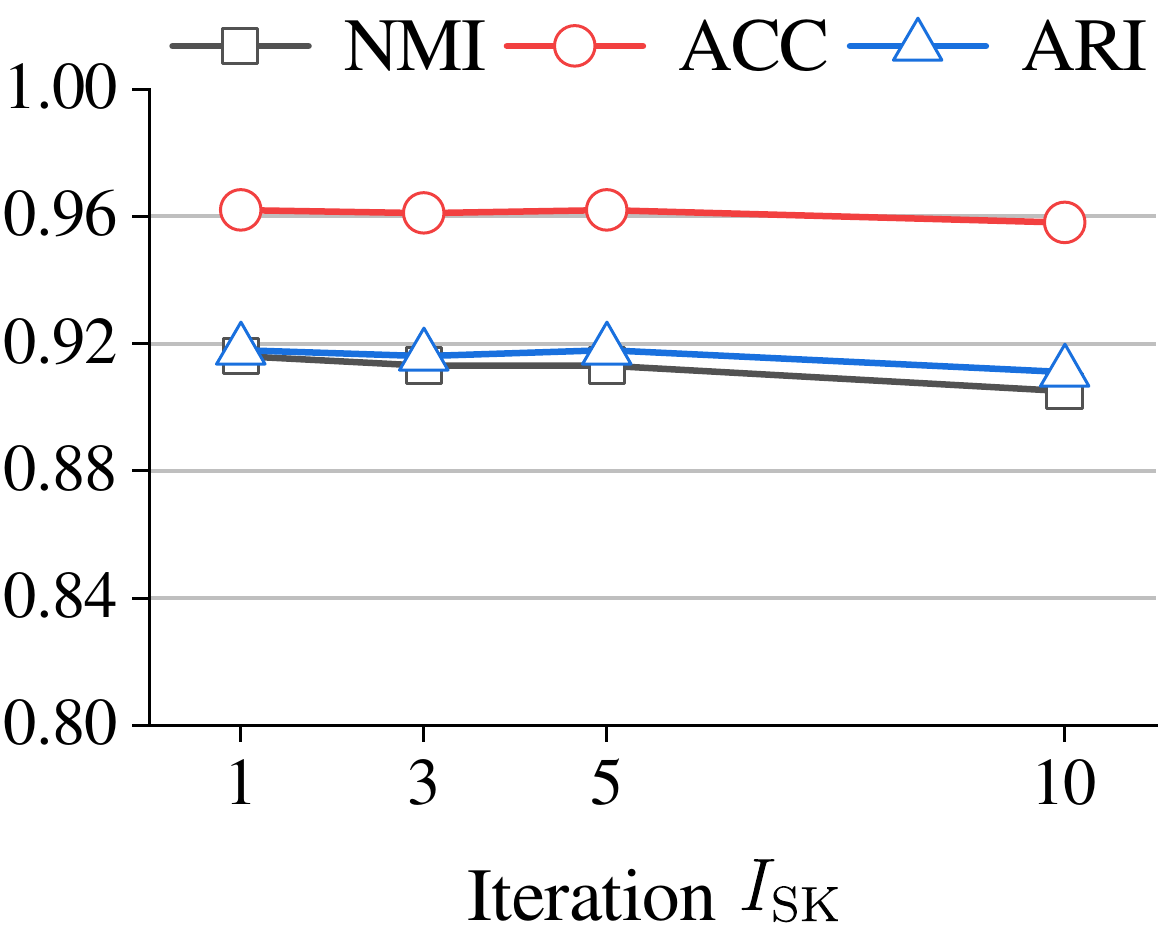}}
	\subfigure[ImageNet-Dogs]{\includegraphics[width=\Withd\columnwidth]{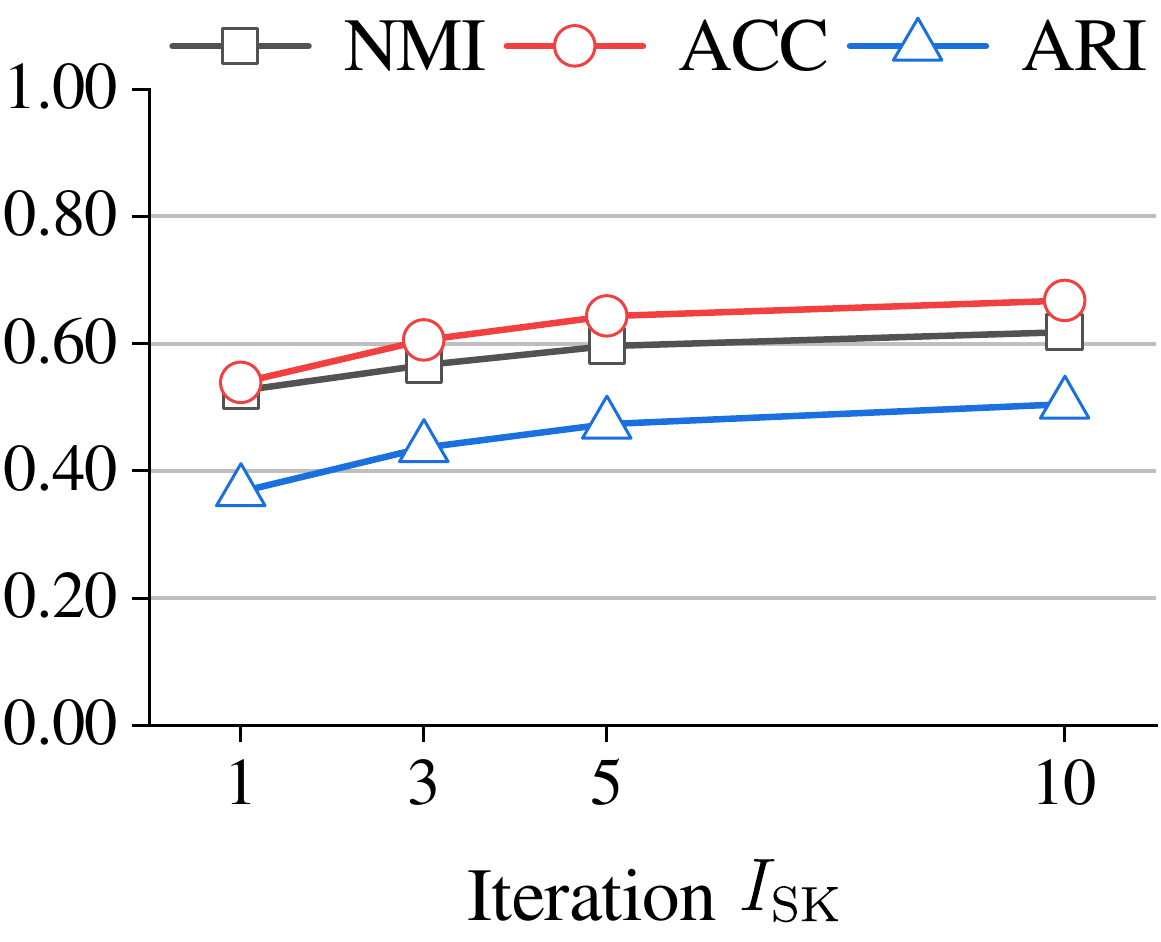}}
	\caption{Effect of the number of iterations $I_{\text{SK}}$ in Sinkhorn’s fixed point iteration.}
	\label{fig_I_sk}
\end{figure}

\noindent\textbf{Qualitative Case Study.} 
To gain a qualitative understanding of BootSC, we strictly randomly select some images from the ImageNet-10 dataset and present their predicted cluster assignments.
As depicted in Figure \ref{img_casestudy}, the top four rows show true positive cases, while the bottom two rows show false negative and false positive cases,  respectively.
Most cluster assignments of the failure cases appear reasonable.
For example, considering the false positive case of the king penguin (row 6, column 1), the shadow of the standing man exhibits certain visual patterns that frequently appear in king penguin images.
As for the false negative airliner (row 5, column 4), the image predominantly depicts the sky, lacking specific distinctive cues for any clusters.
Consequently, BootSC produces a score distribution across multiple potential clusters.
These observations suggest that BootSC learns high-level semantic information underlying the high-dimensional complex data for effective clustering.

\noindent\textbf{Embedding Orthogonality.}
We compare the proposed orthogonal re-parameterization technique with the two existing orthogonalization methods IDFO \cite{tao2020clustering} and SpectralNet \cite{shaham2018spectralnet}.
We first train BootSC without imposing orthogonality constraints on spectral embeddings, which serve as the baseline.
IDFO introduces an additional orthogonal loss term $\rho \|\mathbf{Z}^\intercal\mathbf{Z} - \mathbf{I}_D\|^2$ to the total loss function, where $\rho$ is the trade-off parameter.
We evaluate this method with different values of $\rho \in [0.5,1,2]$.
As shown in Table \ref{tab_orth}, this method fails to produce consistent performance improvements compared to the baseline.
As for SpectralNet which exploits the QR-decomposition for orthogonalization, we observe that it results in training failures in the end-to-end clustering scenario. 
These training failures are not due to the gradient instability introduced by the QR-decomposition, as we also try the straight-through estimator in this setting. 
To further investigate the underlying cause, we present how the semantic inconsistency metric $\|\mathbf{Z}-\mathbf{Z}_{\text{new}}\|_F$, the clustering loss $\mathcal{L}_{c}$, and clustering accuracy vary during training in Figure \ref{fig_inconsistency}. 
The semantic inconsistency metric of SpectralNet fails to converge and suffers from large fluctuations during batch-wise training.
This instability can be attributed to the fact that SpectralNet does not take into account the discrepancy between the original and the orthogonalized matrix, making the cluster prototypes receive significantly different embeddings across batches.
As a result, the clustering loss fails to converge, leading to degenerate clustering performance.
In contrast, our BootSC specifically minimizes the semantic inconsistency and thus achieves better clustering performance.
Interestingly, the semantic inconsistency metric of BootSC steadily decreases during training, which indicates the learned embeddings naturally tend to be orthogonal and less redundant.

\subsection{Sinkhorn's Fixed Point Iteration}
\label{sec_sk}
Solving for the optimal affinity matrix $\mathbf{W}^+$ in Equation \eqref{wq_wplus} is in essence a linear programming problem.
While the problem can be precisely solved by the network simplex algorithm \cite{bonneel2011displacement} in polynomial time, it typically results in \textit{discrete} solutions for $\mathbf{W}^+$. Discrete solutions prove suboptimal in the context of affinity learning, as they tend to pay attention to the strongest affinities while ignoring the potentially valuable insights contained in the less dominant affinities.
This is unreasonable, as each sample can indeed have multiple possible neighbors belonging to the same cluster.
Instead, we employ Sinkhorn's fixed point iteration \cite{cuturi2013sinkhorn} to solve Equation \eqref{wq_wplus}.
It yields the relaxed \textit{continuous} solutions that provide richer supervisory information by considering both the strongest and less dominant affinities, thereby facilitating model training. 
As demonstrated in Table \ref{tab_eta}, the continuous solutions consistently outperform the discrete ones.

\begin{table*}[htbp]
	\centering
	\caption{Clustering performance on the subset of the CIFAR-10 dataset with varying retention rates for each class. \textit{NA}: the result is unavailable due to training failure.}
	\begin{tabular}{c|ccc|ccc|ccc}
		\toprule
		\multirow{2}{*}{Minimum retention rate $r$} & \multicolumn{3}{c|}{0.1} & \multicolumn{3}{c|}{0.5} & \multicolumn{3}{c}{1.0} \\
		& NMI & ACC & ARI & NMI & ACC & ARI & NMI & ACC & ARI \\ \midrule
		SpectralNet  \cite{shaham2018spectralnet}& NA & NA & NA & 0.112 & 0.237 & 0.056 & 0.121 & 0.250 & 0.062 \\
		DivClust \cite{metaxas2023divclust} & 0.590&0.571&0.472& 0.640&0.680&0.540& 0.710 & 0.815 & 0.675  \\
		BootSC (Ours) & \textbf{0.694} & \textbf{0.697} & \textbf{0.612} & \textbf{0.768} & \textbf{0.843} & \textbf{0.726} & \textbf{0.814} & \textbf{0.883} & \textbf{0.780} \\ \bottomrule
	\end{tabular}
	\label{tab_imbalanced_cluster}
\end{table*}

\begin{table*}[htbp]
	\centering
	\caption{Clustering performance on the subset of the CIFAR-10 dataset with varying proportions.}
	\begin{tabular}{c|ccc|ccc|ccc}
		\toprule
		\multirow{2}{*}{Proportions $p$}  & \multicolumn{3}{c|}{0.1} & \multicolumn{3}{c|}{0.5} & \multicolumn{3}{c}{1.0} \\
		& NMI & ACC & ARI & NMI & ACC & ARI & NMI & ACC & ARI \\ \midrule
		SpectralNet \cite{shaham2018spectralnet} & 0.087 & 0.230 & 0.047 & 0.103 & 0.221 & 0.050 & {0.121} & 0.250 & 0.062 \\
		DivClust \cite{metaxas2023divclust} & 0.442 &0.493  & 0.310  &  0.597&		0.679&	0.498  & 0.710 & 0.815 & 0.675 \\
		BootSC (Ours) & \textbf{0.529} & \textbf{0.622} & \textbf{0.430} & \textbf{0.759} & \textbf{0.842} & \textbf{0.711} & \textbf{0.814} & \textbf{0.883} & \textbf{0.780} \\ \bottomrule
	\end{tabular}
	\label{tab_fewer_samples}
\end{table*}

Additionally, we evaluate the effect of the trade-off parameter $\eta$ that weights the entropic regularization term in Sinkhorn's fixed point iteration, thereby controlling the smoothness of the solutions.
We evaluate a set of $\eta$ values from 0.01 to 0.1 with a step size of  0.01.
As shown in Figure \ref{fig_eta}, the peak performance consistently occurs when $\eta$ is close to 0.05, which is a good balance between the entropic regularization and the clustering objective.
A smaller $\eta$ tends to yield more sparse solutions similar to the network simplex algorithm, which provides limited supervisory information.
In constrast, a larger $\eta$ makes the entropic regularization term dominate and yields over-smoothed solutions, which could blur the boundaries between clusters and produce underperformed results. 

Furthermore, we evaluate how the number of iterations $I_{\text{SK}} \in [1, 3, 5, 10]$ in Sinkhorn's fixed-point iteration affects the performance.
Figure \ref{fig_I_sk} indicates that \methodname is robust to the value of $I_{\text{SK}}$, which is attributed to the rapid convergence property of Sinkhorn’s fixed point iteration. 
We recommend fixing $I_{\text{SK}}=5$, at which value Sinkhorn's fixed-point iteration consistently produces near-optimal performance across various datasets. 

\subsection{Evaluation on Imbalanced and Limited Data}
\noindent\textbf{Imbalanced Sample Sizes.}
To study the effect of imbalanced sample sizes on the performance of BootSC, we sample subsets of the CIFAR-10 dataset with varying retention rates.
For a given minimum retention rate $r$, samples of the $k$-th class, $k\in [1,2,\cdots,10]$, are retained with probability $r\times {k}/{10}$.
Consequently, the smallest cluster (class $1$) is approximately $r$ times as large as the largest (class $10$).
We compare our BootSC with the popular deep spectral clustering method SpectralNet \cite{shaham2018spectralnet} and the state-of-the-art deep clustering method DivClust \cite{metaxas2023divclust}.
As shown in Table \ref{tab_imbalanced_cluster}, BootSC consistently outperforms SpectralNet and DivClust across all retention rates, which indicates the robustness of BootSC in handling datasets with unbalanced sample sizes.

\noindent\textbf{Limited Sample Sizes.}
To study the effect of the number of samples on the performance of BootSC, we sample subsets of the CIFAR-10 dataset with varying proportions.
For a given proportion $p$, each sample is retained with probability $p$. This yields a subset with approximately $p\times60,000$ samples, where $60,000$ is the total number of samples in the CIFAR-10 dataset.
As shown in Table \ref{tab_fewer_samples}, BootSC consistently outperforms SpectralNet and DivClust across all proportions, which indicates the effectiveness of BootSC in handling datasets with limited sample sizes.

	\section{Conclusion}
In this paper, we have proposed an end-to-end deep spectral clustering model BootSC that eliminates the need for pre-trained networks and works well for large datasets.
BootSC exploits a novel bootstrap procedure to jointly refine the affinity matrix and the cluster assignment matrix and leverages a semantically consistent re-parameterization technique for spectral embedding orthogonalization.
The experimental results demonstrate that BootSC consistently outperforms the state-of-the-art methods on five benchmark datasets.
The comprehensive ablation study and parameter sensitivity analysis further validate the contribution of its key components toward overall performance improvements. 
Despite these promising results, we recognize that real-world applications like medical image clustering may pose additional challenges due to potential heterogeneous data distributions, privacy constraints, and other factors.
We plan to incorporate external knowledge (stored in Large Language Models) as supervision to further boost spectral clustering.
Moreover, real-world applications may operate in resource-constrained environments like edge devices.
In the future, we would like to integrate BootSC with techniques like model quantization and knowledge distillation to improve efficiency while maintaining clustering performance.

	\section*{Acknowledgments}
		We thank the anonymous reviewers for their valuable and constructive comments. This work was supported partially by the National Natural Science Foundation of China (grant \#62176184) and the Fundamental Research Funds for the Central Universities.
	
	\bibliographystyle{IEEEtran}
	\bibliography{reference}

\begin{thebibliography}{100}
\providecommand{\url}[1]{#1}
\csname url@samestyle\endcsname
\providecommand{\newblock}{\relax}
\providecommand{\bibinfo}[2]{#2}
\providecommand{\BIBentrySTDinterwordspacing}{\spaceskip=0pt\relax}
\providecommand{\BIBentryALTinterwordstretchfactor}{4}
\providecommand{\BIBentryALTinterwordspacing}{\spaceskip=\fontdimen2\font plus
\BIBentryALTinterwordstretchfactor\fontdimen3\font minus
  \fontdimen4\font\relax}
\providecommand{\BIBforeignlanguage}[2]{{%
\expandafter\ifx\csname l@#1\endcsname\relax
\typeout{** WARNING: IEEEtran.bst: No hyphenation pattern has been}%
\typeout{** loaded for the language `#1'. Using the pattern for}%
\typeout{** the default language instead.}%
\else
\language=\csname l@#1\endcsname
\fi
#2}}
\providecommand{\BIBdecl}{\relax}
\BIBdecl

\bibitem{cai2022efficient}
J.~Cai, J.~Fan, W.~Guo, S.~Wang, Y.~Zhang, and Z.~Zhang, ``Efficient deep
  embedded subspace clustering,'' in \emph{IEEE/CVF Conference on Computer
  Vision and Pattern Recognition}, 2022.

\bibitem{zhan2020online}
X.~Zhan, J.~Xie, Z.~Liu, Y.-S. Ong, and C.~C. Loy, ``Online deep clustering for
  unsupervised representation learning,'' in \emph{IEEE/CVF Conference on
  Computer Vision and Pattern Recognition}, 2020.

\bibitem{caron2018deep}
M.~Caron, P.~Bojanowski, A.~Joulin, and M.~Douze, ``Deep clustering for
  unsupervised learning of visual features,'' in \emph{European Conference on
  Computer Vision}, 2018.

\bibitem{chang2017deep}
J.~Chang, L.~Wang, G.~Meng, S.~Xiang, and C.~Pan, ``Deep adaptive image
  clustering,'' in \emph{IEEE/CVF International Conference on Computer Vision},
  2017.

\bibitem{xie2016unsupervised}
J.~Xie, R.~Girshick, and A.~Farhadi, ``Unsupervised deep embedding for
  clustering analysis,'' in \emph{International Conference on Machine
  Learning}, 2016.

\bibitem{ghasedi2017deep}
K.~Ghasedi~Dizaji, A.~Herandi, C.~Deng, W.~Cai, and H.~Huang, ``Deep clustering
  via joint convolutional autoencoder embedding and relative entropy
  minimization,'' in \emph{IEEE/CVF International Conference on Computer
  Vision}, 2017.

\bibitem{guo2017improved}
X.~Guo, L.~Gao, X.~Liu, and J.~Yin, ``Improved deep embedded clustering with
  local structure preservation.'' in \emph{International Joint Conference on
  Artificial IntelligenceI}, 2017.

\bibitem{caron2019unsupervised}
M.~Caron, P.~Bojanowski, J.~Mairal, and A.~Joulin, ``Unsupervised pre-training
  of image features on non-curated data,'' in \emph{IEEE/CVF International
  Conference on Computer Vision}, 2019.

\bibitem{HsuL18}
C.~Hsu and C.~Lin, ``Cnn-based joint clustering and representation learning
  with feature drift compensation for large-scale image data,'' \emph{{IEEE}
  Transactions on Multimedia}, 2018.

\bibitem{xu2024investigating}
J.~Xu, Y.~Ren, X.~Wang, L.~Feng, Z.~Zhang, G.~Niu, and X.~Zhu, ``Investigating
  and mitigating the side effects of noisy views for self-supervised clustering
  algorithms in practical multi-view scenarios,'' in \emph{IEEE/CVF Conference
  on Computer Vision and Pattern Recognition}, 2024.

\bibitem{yang2017towards}
B.~Yang, X.~Fu, N.~D. Sidiropoulos, and M.~Hong, ``Towards k-means-friendly
  spaces: Simultaneous deep learning and clustering,'' in \emph{International
  Conference on Machine Learning}, 2017.

\bibitem{shi2000normalized}
J.~Shi and J.~Malik, ``Normalized cuts and image segmentation,'' \emph{{IEEE}
  Transactions on Pattern Analysis and Machine Intelligence}, 2000.

\bibitem{shaham2018spectralnet}
U.~Shaham, K.~Stanton, H.~Li, B.~Nadler, R.~Basri, and Y.~Kluger,
  ``Spectralnet: Spectral clustering using deep neural networks,'' in
  \emph{International Conference on Learning Representations}, 2018.

\bibitem{yang2019deep}
X.~Yang, C.~Deng, F.~Zheng, J.~Yan, and W.~Liu, ``Deep spectral clustering
  using dual autoencoder network,'' in \emph{IEEE/CVF Conference on Computer
  Vision and Pattern Recognition}, 2019.

\bibitem{huang2019multi}
Z.~Huang, J.~T. Zhou, X.~Peng, C.~Zhang, H.~Zhu, and J.~Lv, ``Multi-view
  spectral clustering network,'' in \emph{International Joint Conference on
  Artificial IntelligenceI}, 2019.

\bibitem{huang2019multispectralnet}
S.~Huang, K.~Ota, M.~Dong, and F.~Li, ``Multispectralnet: Spectral clustering
  using deep neural network for multi-view data,'' \emph{{IEEE} Transactions on
  Computational Social Systems}, 2019.

\bibitem{duan2019improving}
L.~Duan, C.~Aggarwal, S.~Ma, and S.~Sathe, ``Improving spectral clustering with
  deep embedding and cluster estimation,'' in \emph{{IEEE} International
  Conference on Data Mining}, 2019.

\bibitem{affeldt2020spectral}
S.~Affeldt, L.~Labiod, and M.~Nadif, ``Spectral clustering via ensemble deep
  autoencoder learning (sc-edae),'' \emph{Pattern Recognition}, 2020.

\bibitem{ye2021spectral}
X.~Ye, C.~Wang, A.~Imakura, and T.~Sakurai, ``Spectral clustering joint deep
  embedding learning by autoencoder,'' in \emph{International Joint Conference
  on Neural Networks}, 2021.

\bibitem{bengio2006greedy}
Y.~Bengio, P.~Lamblin, D.~Popovici, and H.~Larochelle, ``Greedy layer-wise
  training of deep networks,'' \emph{Annual Conference on Neural Information
  Processing Systems}, 2006.

\bibitem{lecun2015deep}
Y.~LeCun, Y.~Bengio, and G.~Hinton, ``Deep learning,'' \emph{Nature}, 2015.

\bibitem{cuturi2013sinkhorn}
M.~Cuturi, ``Sinkhorn distances: Lightspeed computation of optimal transport,''
  \emph{Annual Conference on Neural Information Processing Systems}, 2013.

\bibitem{schonemann1966generalized}
P.~H. Sch{\"o}nemann, ``A generalized solution of the orthogonal procrustes
  problem,'' \emph{Psychometrika}, 1966.

\bibitem{guo2021deep}
W.~Guo, K.~Lin, and W.~Ye, ``Deep embedded k-means clustering,'' in
  \emph{{IEEE} International Conference on Data Mining Workshops}, 2021.

\bibitem{ji2017deep}
P.~Ji, T.~Zhang, H.~Li, M.~Salzmann, and I.~Reid, ``Deep subspace clustering
  networks,'' \emph{Annual Conference on Neural Information Processing
  Systems}, 2017.

\bibitem{WangCGZJ21}
Q.~Wang, J.~Cheng, Q.~Gao, G.~Zhao, and L.~Jiao, ``Deep multi-view subspace
  clustering with unified and discriminative learning,'' \emph{{IEEE}
  Transactions on Multimedia}, 2021.

\bibitem{SunWZLGZZ22}
M.~Sun, S.~Wang, P.~Zhang, X.~Liu, X.~Guo, S.~Zhou, and E.~Zhu, ``Projective
  multiple kernel subspace clustering,'' \emph{{IEEE} Transactions on
  Multimedia}, 2022.

\bibitem{yang2016joint}
J.~Yang, D.~Parikh, and D.~Batra, ``Joint unsupervised learning of deep
  representations and image clusters,'' in \emph{IEEE/CVF Conference on
  Computer Vision and Pattern Recognition}, 2016.

\bibitem{LiLWZH20}
Y.~Li, M.~Liu, W.~Wang, Y.~Zhang, and Q.~He, ``Acoustic scene clustering using
  joint optimization of deep embedding learning and clustering iteration,''
  \emph{{IEEE} Transactions on Multimedia}, 2020.

\bibitem{sun2022network}
X.~Sun, Z.~Song, Y.~Yu, J.~Dong, C.~Plant, and C.~B{\"o}hm, ``Network embedding
  via deep prediction model,'' \emph{IEEE Transactions on Big Data}, 2022.

\bibitem{guo2024deep}
W.~Guo and W.~Ye, ``Deep spectral clustering via joint spectral embedding and
  kmeans,'' in \emph{{IEEE} International Conference on Systems, Man, and
  Cybernetics}, 2024.

\bibitem{dang2021nearest}
Z.~Dang, C.~Deng, X.~Yang, K.~Wei, and H.~Huang, ``Nearest neighbor matching
  for deep clustering,'' in \emph{IEEE/CVF Conference on Computer Vision and
  Pattern Recognition}, 2021.

\bibitem{wu2019deep}
J.~Wu, K.~Long, F.~Wang, C.~Qian, C.~Li, Z.~Lin, and H.~Zha, ``Deep
  comprehensive correlation mining for image clustering,'' in \emph{IEEE/CVF
  Conference on Computer Vision and Pattern Recognition}, 2019.

\bibitem{van2020scan}
W.~Van~Gansbeke, S.~Vandenhende, S.~Georgoulis, M.~Proesmans, and L.~Van~Gool,
  ``Scan: Learning to classify images without labels,'' in \emph{European
  Conference on Computer Vision}, 2020.

\bibitem{ji2019invariant}
X.~Ji, J.~F. Henriques, and A.~Vedaldi, ``Invariant information clustering for
  unsupervised image classification and segmentation,'' in \emph{IEEE/CVF
  International Conference on Computer Vision}, 2019.

\bibitem{WangCFZ23}
Y.~Wang, D.~Chang, Z.~Fu, and Y.~Zhao, ``Consistent multiple graph embedding
  for multi-view clustering,'' \emph{{IEEE} Transactions on Multimedia}, 2023.

\bibitem{yang2022learning}
X.~Yang, J.~Yan, Y.~Cheng, and Y.~Zhang, ``Learning deep generative clustering
  via mutual information maximization,'' \emph{{IEEE} Transactions on Neural
  Networks and Learning Systems}, 2022.

\bibitem{chen2020simple}
T.~Chen, S.~Kornblith, M.~Norouzi, and G.~Hinton, ``A simple framework for
  contrastive learning of visual representations,'' in \emph{International
  Conference on Machine Learning}, 2020.

\bibitem{he2020momentum}
K.~He, H.~Fan, Y.~Wu, S.~Xie, and R.~Girshick, ``Momentum contrast for
  unsupervised visual representation learning,'' in \emph{IEEE/CVF Conference
  on Computer Vision and Pattern Recognition}, 2020.

\bibitem{li2021contrastive}
Y.~Li, P.~Hu, Z.~Liu, D.~Peng, J.~T. Zhou, and X.~Peng, ``Contrastive
  clustering,'' in \emph{AAAI Conference on Artificial Intelligence}, 2021.

\bibitem{shen2021you}
Y.~Shen, Z.~Shen, M.~Wang, J.~Qin, P.~Torr, and L.~Shao, ``You never cluster
  alone,'' \emph{Annual Conference on Neural Information Processing Systems},
  2021.

\bibitem{PengLLQCL24}
B.~Peng, G.~Lin, J.~Lei, T.~Qin, X.~Cao, and N.~Ling, ``Contrastive multi-view
  learning for 3d shape clustering,'' \emph{{IEEE} Transactions on Multimedia},
  2024.

\bibitem{WuZRHPHHH24}
S.~Wu, Y.~Zheng, Y.~Ren, J.~He, X.~Pu, S.~Huang, Z.~Hao, and L.~He,
  ``Self-weighted contrastive fusion for deep multi-view clustering,''
  \emph{{IEEE} Transactions on Multimedia}, 2024.

\bibitem{kwon2023image}
S.~Kwon, J.~Park, M.~Kim, J.~Cho, E.~K. Ryu, and K.~Lee, ``Image clustering
  conditioned on text criteria,'' in \emph{International Conference on Learning
  Representations}, 2024.

\bibitem{zhang2023clusterllm}
Y.~Zhang, Z.~Wang, and J.~Shang, ``Clusterllm: Large language models as a guide
  for text clustering,'' in \emph{Conference on Empirical Methods in Natural
  Language Processing}, 2023.

\bibitem{viswanathan2023large}
V.~Viswanathan, K.~Gashteovski, C.~Lawrence, T.~Wu, and G.~Neubig, ``Large
  language models enable few-shot clustering,'' \emph{Transactions of the
  Association for Computational Linguistics}, 2024.

\bibitem{achiam2023gpt}
J.~Achiam, S.~Adler, S.~Agarwal, L.~Ahmad, I.~Akkaya, F.~L. Aleman, D.~Almeida,
  J.~Altenschmidt, S.~Altman, S.~Anadkat \emph{et~al.}, ``Gpt-4 technical
  report,'' \emph{arXiv preprint arXiv:2303.08774}, 2023.

\bibitem{hagen1992new}
L.~Hagen and A.~B. Kahng, ``New spectral methods for ratio cut partitioning and
  clustering,'' \emph{{IEEE} Transactions on Computer-Aided Design of
  Integrated Circuits and Systems}, 1992.

\bibitem{von2007tutorial}
U.~Von~Luxburg, ``A tutorial on spectral clustering,'' \emph{Statistics and
  computing}, 2007.

\bibitem{li2024deep}
X.~Li, T.~Wei, and Y.~Zhao, ``Deep spectral clustering with constrained
  laplacian rank,'' \emph{{IEEE} Transactions on Neural Networks and Learning
  Systems}, 2024.

\bibitem{zhao2023spectral}
Y.~Zhao and X.~Li, ``Spectral clustering with adaptive neighbors for deep
  learning,'' \emph{{IEEE} Transactions on Neural Networks and Learning
  Systems}, 2023.

\bibitem{li2018dynamic}
Z.~Li, F.~Nie, X.~Chang, Y.~Yang, C.~Zhang, and N.~Sebe, ``Dynamic affinity
  graph construction for spectral clustering using multiple features,''
  \emph{{IEEE} Transactions on Neural Networks and Learning Systems}, 2018.

\bibitem{huang2012affinity}
H.-C. Huang, Y.-Y. Chuang, and C.-S. Chen, ``Affinity aggregation for spectral
  clustering,'' in \emph{IEEE/CVF Conference on Computer Vision and Pattern
  Recognition}, 2012.

\bibitem{ren2020consensus}
Z.~Ren, S.~X. Yang, Q.~Sun, and T.~Wang, ``Consensus affinity graph learning
  for multiple kernel clustering,'' \emph{{IEEE} Transactions on Cybernetics},
  2020.

\bibitem{TangZLLWZW19}
C.~Tang, X.~Zhu, X.~Liu, M.~Li, P.~Wang, C.~Zhang, and L.~Wang, ``Learning a
  joint affinity graph for multiview subspace clustering,'' \emph{{IEEE}
  Transactions on Multimedia}, 2019.

\bibitem{zelnik2004self}
L.~Zelnik-Manor and P.~Perona, ``Self-tuning spectral clustering,''
  \emph{Annual Conference on Neural Information Processing Systems}, 2004.

\bibitem{zhu2014constructing}
X.~Zhu, C.~Change~Loy, and S.~Gong, ``Constructing robust affinity graphs for
  spectral clustering,'' in \emph{IEEE/CVF Conference on Computer Vision and
  Pattern Recognition}, 2014.

\bibitem{breiman2001random}
L.~Breiman, ``Random forests,'' \emph{Machine learning}, 2001.

\bibitem{fowlkes2004spectral}
C.~Fowlkes, S.~Belongie, F.~Chung, and J.~Malik, ``Spectral grouping using the
  nystrom method,'' \emph{{IEEE} Transactions on Pattern Analysis and Machine
  Intelligence}, 2004.

\bibitem{lin2010power}
F.~Lin and W.~W. Cohen, ``Power iteration clustering,'' in \emph{International
  Conference on Machine Learning}, 2010.

\bibitem{ye2016fuse}
W.~Ye, S.~Goebl, C.~Plant, and C.~B{\"o}hm, ``Fuse: Full spectral clustering,''
  in \emph{{ACM} {SIGKDD} International Conference on Knowledge Discovery and
  Data Mining}, 2016.

\bibitem{cai2014large}
D.~Cai and X.~Chen, ``Large scale spectral clustering via landmark-based sparse
  representation,'' \emph{{IEEE} Transactions on Cybernetics}, 2014.

\bibitem{wang2020large}
Z.~Wang, Z.~Li, R.~Wang, F.~Nie, and X.~Li, ``Large graph clustering with
  simultaneous spectral embedding and discretization,'' \emph{{IEEE}
  Transactions on Pattern Analysis and Machine Intelligence}, 2020.

\bibitem{han2017mini}
Y.~Han and M.~Filippone, ``Mini-batch spectral clustering,'' in
  \emph{International Joint Conference on Neural Networks}, 2017.

\bibitem{shi2003multiclass}
S.~X. Yu and J.~Shi, ``Multiclass spectral clustering,'' in \emph{IEEE/CVF
  International Conference on Computer Vision}, 2003.

\bibitem{huang2013spectral}
J.~Huang, F.~Nie, and H.~Huang, ``Spectral rotation versus k-means in spectral
  clustering,'' in \emph{AAAI Conference on Artificial Intelligence}, 2013.

\bibitem{chen2017scalable}
X.~Chen, F.~Nie, J.~Z. Huang, and M.~Yang, ``Scalable normalized cut with
  improved spectral rotation.'' in \emph{International Joint Conference on
  Artificial IntelligenceI}, 2017.

\bibitem{damle2019simple}
A.~Damle, V.~Minden, and L.~Ying, ``Simple, direct and efficient multi-way
  spectral clustering,'' \emph{Information and Inference: A Journal of the
  IMA}, 2019.

\bibitem{wang2021fast}
Z.~Wang, X.~Dai, P.~Zhu, R.~Wang, X.~Li, and F.~Nie, ``Fast optimization of
  spectral embedding and improved spectral rotation,'' \emph{{IEEE}
  Transactions on Knowledge and Data Engineering}, 2021.

\bibitem{yang2016unified}
Y.~Yang, S.~Fumin, H.~Zi, and H.~T. Shen, ``A unified framework for discrete
  spectral clustering,'' in \emph{International Joint Conference on Artificial
  IntelligenceI}, 2016.

\bibitem{kang2018unified}
Z.~Kang, C.~Peng, Q.~Cheng, and Z.~Xu, ``Unified spectral clustering with
  optimal graph,'' in \emph{AAAI Conference on Artificial Intelligence}, 2018.

\bibitem{WanZSYYY24}
M.~Wan, J.~Zhu, C.~Sun, Z.~Yang, J.~Yin, and G.~Yang, ``Tensor low-rank graph
  embedding and learning for one-step incomplete multi-view clustering,''
  \emph{{IEEE} Transactions on Multimedia}, 2024.

\bibitem{zhong2021improved}
G.~Zhong and C.-M. Pun, ``Improved normalized cut for multi-view clustering,''
  \emph{{IEEE} Transactions on Pattern Analysis and Machine Intelligence},
  2021.

\bibitem{tian2014learning}
F.~Tian, B.~Gao, Q.~Cui, E.~Chen, and T.-Y. Liu, ``Learning deep
  representations for graph clustering,'' in \emph{AAAI Conference on
  Artificial Intelligence}, 2014.

\bibitem{flamary2016optimal}
R.~Flamary, N.~Courty, D.~Tuia, and A.~Rakotomamonjy, ``Optimal transport for
  domain adaptation,'' \emph{{IEEE} Transactions on Pattern Analysis and
  Machine Intelligence}, 2016.

\bibitem{asano2019self}
Y.~Asano, C.~Rupprecht, and A.~Vedaldi, ``Self-labelling via simultaneous
  clustering and representation learning,'' in \emph{International Conference
  on Learning Representations}, 2019.

\bibitem{caron2020unsupervised}
M.~Caron, I.~Misra, J.~Mairal, P.~Goyal, P.~Bojanowski, and A.~Joulin,
  ``Unsupervised learning of visual features by contrasting cluster
  assignments,'' \emph{Annual Conference on Neural Information Processing
  Systems}, 2020.

\bibitem{ge2021ota}
Z.~Ge, S.~Liu, Z.~Li, O.~Yoshie, and J.~Sun, ``Ota: Optimal transport
  assignment for object detection,'' in \emph{IEEE/CVF Conference on Computer
  Vision and Pattern Recognition}, 2021.

\bibitem{meilua2001random}
M.~Meil{\u{a}} and J.~Shi, ``A random walks view of spectral segmentation,'' in
  \emph{International Conference on Artificial Intelligence and Statistics},
  2001.

\bibitem{radford2021learning}
A.~Radford, J.~W. Kim, C.~Hallacy, A.~Ramesh, G.~Goh, S.~Agarwal, G.~Sastry,
  A.~Askell, P.~Mishkin, J.~Clark \emph{et~al.}, ``Learning transferable visual
  models from natural language supervision,'' in \emph{International Conference
  on Machine Learning}, 2021.

\bibitem{bezdek2013pattern}
J.~C. Bezdek, ``Pattern recognition with fuzzy objective function algorithms,''
  in \emph{Advanced Applications in Pattern Recognition}, 1981.

\bibitem{bengio2013estimating}
Y.~Bengio, N.~L{\'e}onard, and A.~Courville, ``Estimating or propagating
  gradients through stochastic neurons for conditional computation,''
  \emph{arXiv preprint arXiv:1308.3432}, 2013.

\bibitem{chen2021exploring}
X.~Chen and K.~He, ``Exploring simple siamese representation learning,'' in
  \emph{IEEE/CVF Conference on Computer Vision and Pattern Recognition}, 2021.

\bibitem{krizhevsky2009learning}
A.~Krizhevsky, G.~Hinton \emph{et~al.}, ``Learning multiple layers of features
  from tiny images,'' \emph{University of Toronto}, 2012.

\bibitem{le2015tiny}
Y.~Le and X.~Yang, ``Tiny imagenet visual recognition challenge,'' \emph{CS
  231N}, 2015.

\bibitem{metaxas2023divclust}
I.~M. Metaxas, G.~Tzimiropoulos, and I.~Patras, ``Divclust: Controlling
  diversity in deep clustering,'' in \emph{IEEE/CVF Conference on Computer
  Vision and Pattern Recognition}, 2023.

\bibitem{lloyd1982least}
S.~Lloyd, ``Least squares quantization in pcm,'' \emph{{IEEE} Transactions on
  Information Theory}, 1982.

\bibitem{gowda1978agglomerative}
K.~C. Gowda and G.~Krishna, ``Agglomerative clustering using the concept of
  mutual nearest neighbourhood,'' \emph{Pattern Recognition}, 1978.

\bibitem{cai2009locality}
D.~Cai, X.~He, X.~Wang, H.~Bao, and J.~Han, ``Locality preserving nonnegative
  matrix factorization,'' in \emph{International Joint Conference on Artificial
  IntelligenceI}, 2009.

\bibitem{zhong2021graph}
H.~Zhong, J.~Wu, C.~Chen, J.~Huang, M.~Deng, L.~Nie, Z.~Lin, and X.-S. Hua,
  ``Graph contrastive clustering,'' in \emph{IEEE/CVF International Conference
  on Computer Vision}, 2021.

\bibitem{CaiZWFG24}
J.~Cai, Y.~Zhang, S.~Wang, J.~Fan, and W.~Guo, ``Wasserstein embedding learning
  for deep clustering: {A} generative approach,'' \emph{{IEEE} Transactions on
  Multimedia}, 2024.

\bibitem{huang2020deep}
J.~Huang, S.~Gong, and X.~Zhu, ``Deep semantic clustering by partition
  confidence maximisation,'' in \emph{IEEE/CVF Conference on Computer Vision
  and Pattern Recognition}, 2020.

\bibitem{guo2022hcsc}
Y.~Guo, M.~Xu, J.~Li, B.~Ni, X.~Zhu, Z.~Sun, and Y.~Xu, ``Hcsc: Hierarchical
  contrastive selective coding,'' in \emph{IEEE/CVF Conference on Computer
  Vision and Pattern Recognition}, 2022.

\bibitem{tao2020clustering}
Y.~Tao, K.~Takagi, and K.~Nakata, ``Clustering-friendly representation learning
  via instance discrimination and feature decorrelation,'' in
  \emph{International Conference on Learning Representations}, 2021.

\bibitem{huang2022deepclue}
D.~Huang, D.-H. Chen, X.~Chen, C.-D. Wang, and J.-H. Lai, ``Deepclue: Enhanced
  deep clustering via multi-layer ensembles in neural networks,'' \emph{{IEEE}
  Transactions on Emerging Topics in Computational Intelligence}, 2023.

\bibitem{wang2024semantic}
N.~Wang, X.~Ye, J.~Zhao, and Q.~Wang, ``Semantic spectral clustering with
  contrastive learning and neighbor mining,'' \emph{Neural Processing Letters},
  2024.

\bibitem{he2016deep}
K.~He, X.~Zhang, S.~Ren, and J.~Sun, ``Deep residual learning for image
  recognition,'' in \emph{IEEE/CVF Conference on Computer Vision and Pattern
  Recognition}, 2016.

\bibitem{glorot2010understanding}
X.~Glorot and Y.~Bengio, ``Understanding the difficulty of training deep
  feedforward neural networks,'' in \emph{International Conference on
  Artificial Intelligence and Statistics}, 2010.

\bibitem{loshchilov2016sgdr}
I.~Loshchilov and F.~Hutter, ``Sgdr: Stochastic gradient descent with warm
  restarts,'' in \emph{International Conference on Learning Representations},
  2016.

\bibitem{paszke2019pytorch}
A.~Paszke, S.~Gross, F.~Massa, A.~Lerer, J.~Bradbury, G.~Chanan, T.~Killeen,
  Z.~Lin, N.~Gimelshein, L.~Antiga \emph{et~al.}, ``Pytorch: An imperative
  style, high-performance deep learning library,'' \emph{Annual Conference on
  Neural Information Processing Systems}, 2019.

\bibitem{vinh2009information}
N.~X. Vinh, J.~Epps, and J.~Bailey, ``Information theoretic measures for
  clusterings comparison: is a correction for chance necessary?'' in
  \emph{International Conference on Machine Learning}, 2009.

\bibitem{hubert1985comparing}
L.~Hubert and P.~Arabie, ``Comparing partitions,'' \emph{Journal of
  classification}, 1985.

\bibitem{bonneel2011displacement}
N.~Bonneel, M.~Van De~Panne, S.~Paris, and W.~Heidrich, ``Displacement
  interpolation using lagrangian mass transport,'' in \emph{{ACM} Transactions
  on Graphics}, 2011.

\end{thebibliography}
	
	\begin{IEEEbiography}[{\includegraphics[width=1in,height=1.25in,trim={0in 0in 0in 0in},clip,keepaspectratio]{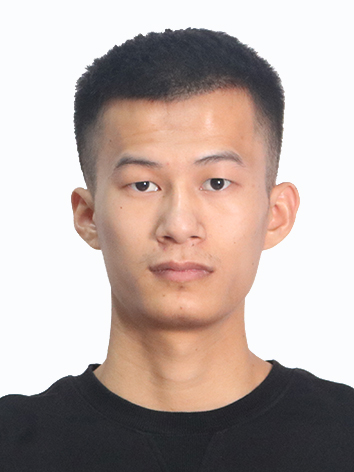}}]{Wengang Guo}
		is a PhD student in the College of Electronic and Information Engineering at Tongji University, Shanghai, China. He obtained his Bachelor's degree in Automation from Shanghai DianJi University, Shanghai, China, in 2020. His research interests primarily focus on clustering and computer vision.
	\end{IEEEbiography}	
	
	\begin{IEEEbiography}[{\includegraphics[width=1in,height=1.25in,trim={0.7in 0in 0.18in 0in},clip,keepaspectratio]{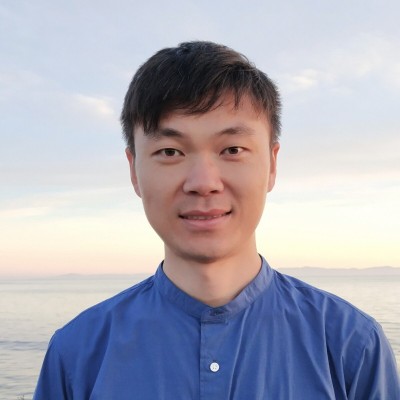}}]{Wei Ye} received the PhD degree in Computer Science from Institut f\"{u}r Informatik, Ludwig-Maximilians-Universit\"{a}t M\"{u}nchen, Munich, Germany, in 2018. He is a tenure-track professor with the College of Electronic and Information Engineering at Tongji University, Shanghai, China, Frontier Science Center for Intelligent Autonomous Systems, Ministry of Education, China, and Shanghai Innovation Institute, Shanghai, China. He was a postdoctoral researcher with the DYNAMO lab at University of California, Santa Barbara, from 2018 to 2020. Before that, he worked as a researcher in the Department of AI Platform, Tencent, China. His research interests include data mining, graph-based machine learning, deep learning, and network science.
	\end{IEEEbiography}
	
	\begin{IEEEbiography}[{\includegraphics[width=1in,height=1.25in,trim={0in 0in 0in 0in},clip,keepaspectratio]{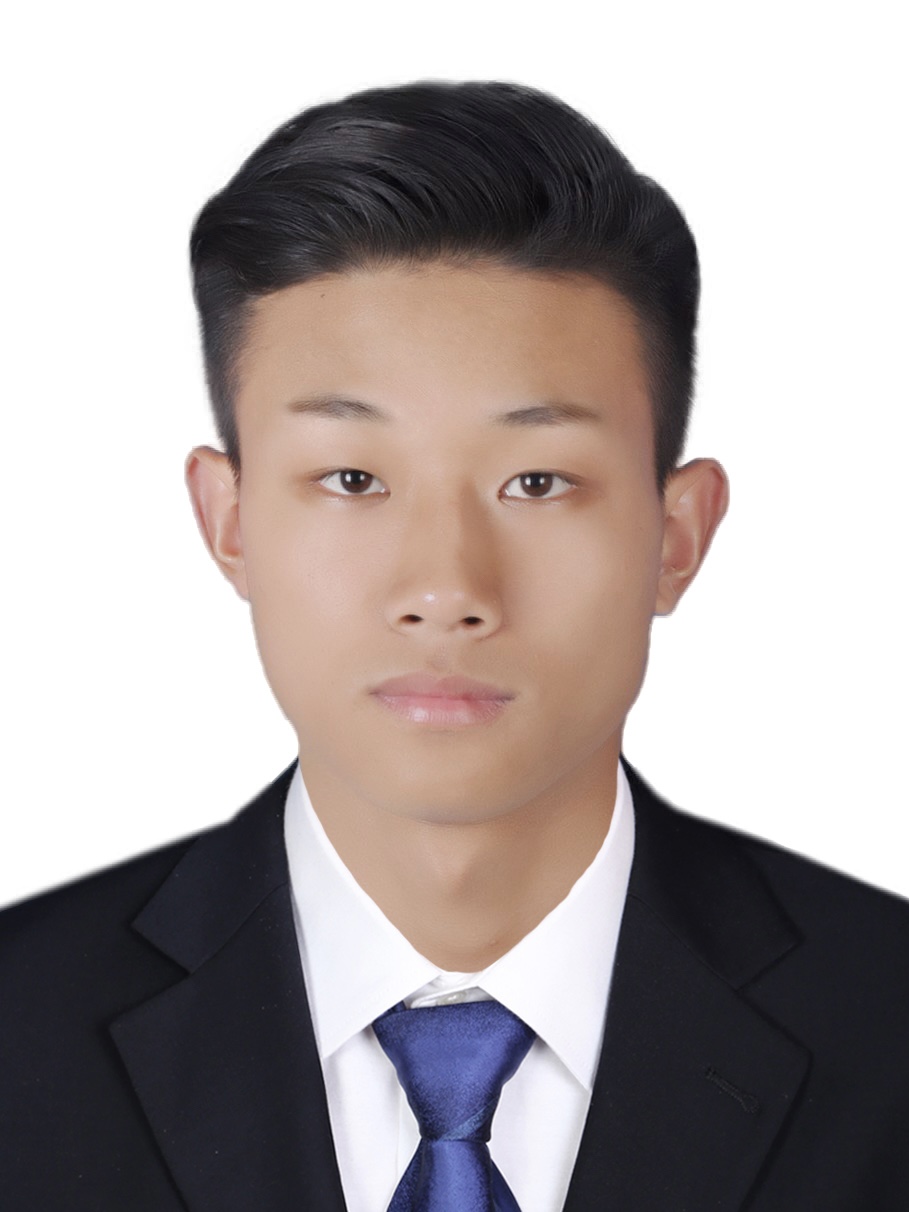}}]{Chunchun Chen} received the ME degree in computer technology from China Jiliang University, Hangzhou, China, in 2023. He is currently working toward the PhD degree with the Shanghai Research Institute for Intelligent Autonomous Systems, Tongji University, Shanghai, China. His research interests include machine learning, unsupervised graph learning, community detection.
	\end{IEEEbiography}	
	\vspace{-5mm}
    
	\begin{IEEEbiography}[{\includegraphics[width=1in,height=1.25in,trim={0in 0in 0in 0in},clip,keepaspectratio]{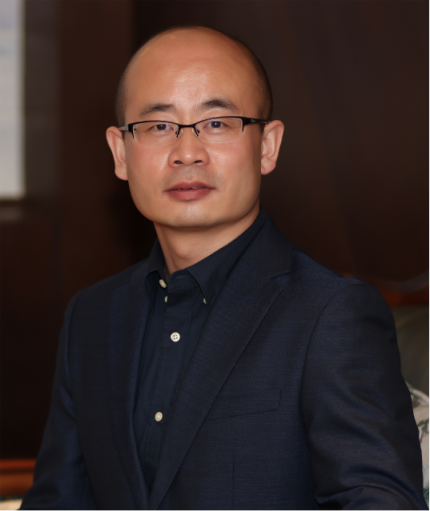}}]{Xin Sun} (Member, IEEE) received the bachelor’s, M.Sc., and Ph.D. degrees from the College of Computer Science and Technology, Jilin University, Changchun, China, in 2007, 2010, and 2013, respectively.
	He is a Full Professor with the Faculty of Data Science, City University of Macau, Macau, China. He was an experienced Humboldt Researcher at the Technical University of Munich (TUM), Munich, Germany, from 2022 to 2023. He did the Post-Doctoral Research at the Department of Computer Science, Ludwig-Maximilians-Universität München, Munich, from 2016 to 2017. His research interests include machine learning, remote sensing, and computer vision.
	\end{IEEEbiography}	
	\vspace{-5mm}
    
	\begin{IEEEbiography}[{\includegraphics[width=1in,height=1.25in,trim={0in 0in 0in 0in},clip,keepaspectratio]{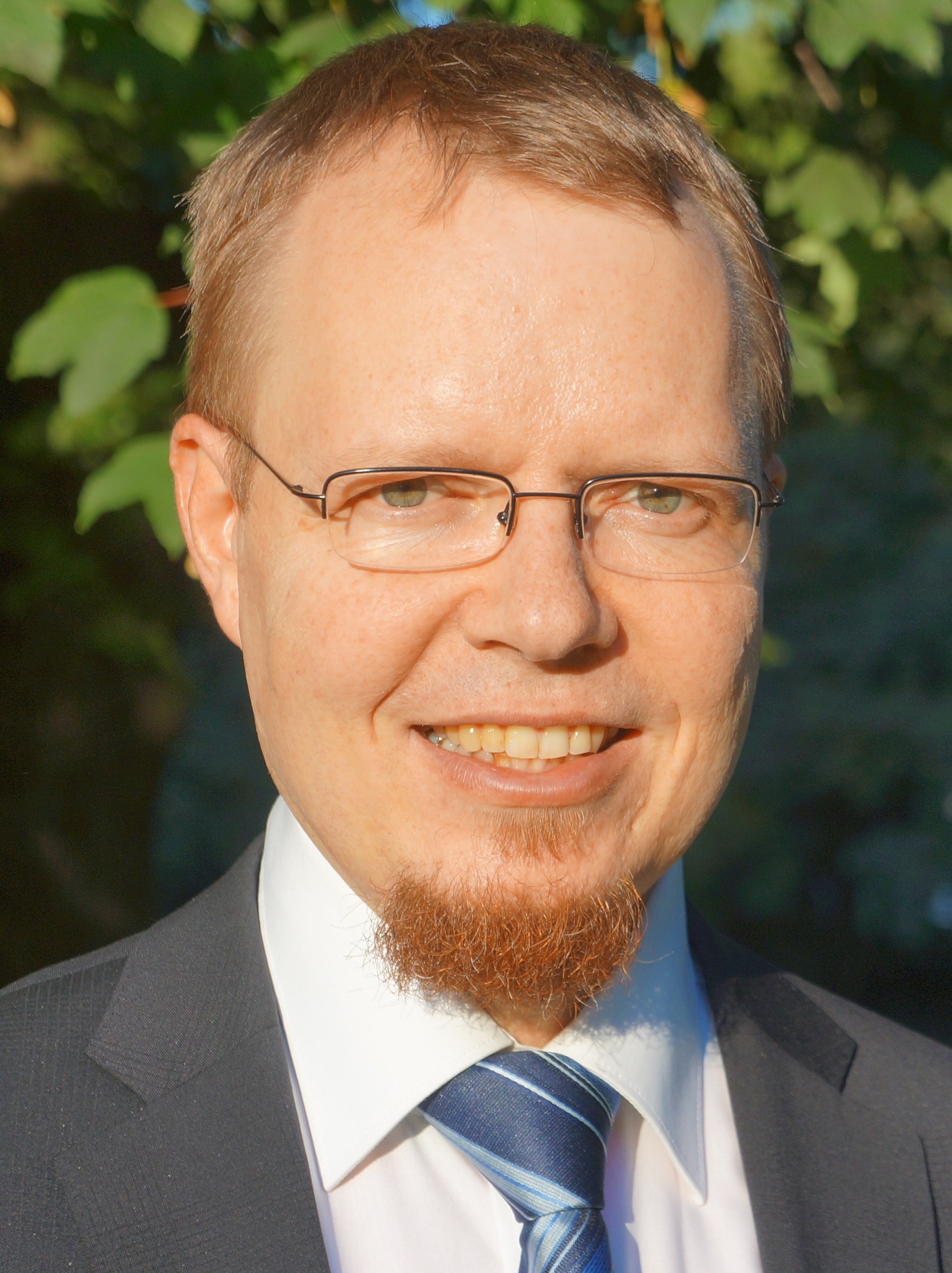}}]{Christian B\"ohm} received the PhD degree, in 1998 and the habilitation degree, in 2001. He is currently a professor of computer science	with the University of Vienna, Vienna, Austria. His research interests include database systems and data mining, particularly index structures for	similarity search and clustering algorithms. He has received several research awards in the top-tier data mining conferences.
	\end{IEEEbiography}	
	\vspace{-5mm}
    
	\begin{IEEEbiography}[{\includegraphics[width=1in,height=1.25in,trim={0in 0in 0in 0in},clip,keepaspectratio]{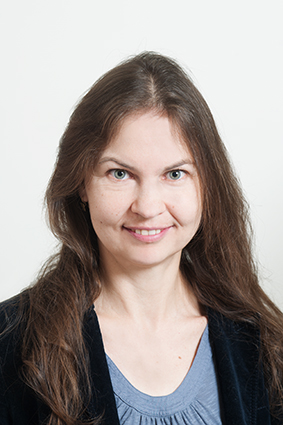}}]{Claudia Plant} received the PhD degree, in 2007. She is currently a professor of computer science with the University of Vienna, Vienna, Austria. Her research focuses on databases and data mining,	especially clustering, information-theoretic data mining, and integrative mining of heterogeneous	data. She has received several best paper awards in the top-tier data mining conferences.
	\end{IEEEbiography}	
	\vspace{-5mm}
    
	\begin{IEEEbiography}[{\includegraphics[width=1in,height=1.25in,trim={0in 0in 0in 0in},clip,keepaspectratio]{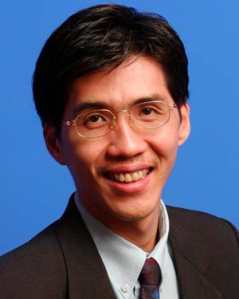}}]{Susanto Rahardja} (Fellow, IEEE) is currently a Professor of engineering cluster with the Singapore Institute of Technology, Singapore. His research interests include multimedia coding and processing, wireless communications, discrete transforms, machine learning, signal processing algorithms, and implementation and optimization. He contributed to the development of a series of audio compression technologies, such as Audio Video Standards AVS-L, AVS-2, ISO/IEC 14496-3:2005/Amd.2:2006, and ISO/IEC 14496-3:2005/Amd.3:2006, which have licensed worldwide. Mr. Rahardja has more than 15 years of experience in leading a research team in the above-mentioned areas. He was an Associate Editor for IEEE TRANSACTIONS ON AUDIO, SPEECH AND LANGUAGE PROCESSING, and Senior Editor for IEEE JOURNAL OF SELECTED TOPICS IN SIGNAL PROCESSING. He is an Associate Editor for Elsevier Journal of Visual Communication and Image Representation, IEEE TRANSACTIONS ON INDUSTRIAL ELECTRONICS, IEEE TRANSACTIONS ON MULTIMEDIA, and Member of Editorial Board of IEEE ACCESS. He is a Fellow of the Academy Engineering, Singapore.
	\end{IEEEbiography}	
	
\end{document}